\pgfplotsset{compat=1.18}
\newtcbox{\tightbox}{colback=white,arc=0pt,boxsep=0pt,left=0pt,right=0pt,top=0pt,bottom=0pt,boxrule=0.5pt,varwidth upper}
\newcommand{\cmark}{\ding{51}}%
\newcommand{\xmark}{\ding{55}}%
\tikzset{
  ln/.style  = { draw, thick, fill=black, circle, inner sep=0.3mm},
  sqloc/.style  = { draw, thick, inner sep=1.0mm },
  loc/.style    = { draw, circle, thick, inner sep=1.0mm },
  notice/.style= { draw, rectangle callout, thick, rounded corners=5pt,fill=blue!10,callout relative pointer={#1} },
  treenode/.style = {align=center, inner sep=0pt, text centered,
    font=\sffamily},
  arn_n/.style = {treenode, rectangle, font=\sffamily\bfseries },
  mymat/.style = { 
    left delimiter={[}, right delimiter ={]},nodes={anchor=base east} },
  align at top/.style={baseline=(current bounding box.north)},
  stack/.style={rectangle split, rectangle split parts=#1,draw, anchor=center}
}
\def\cca#1{\cellcolor{blue!#10}\ifnum #1>5\color{white}\fi{#1}}
\newcommandx{\unsure}[2][1=]{\todo[linecolor=red,backgroundcolor=red!25,bordercolor=red,#1]{#2}}
\newcommandx{\change}[2][1=]{\todo[linecolor=blue,backgroundcolor=blue!25,bordercolor=blue,#1]{#2}}
\newcommandx{\info}[2][1=]{\todo[linecolor=OliveGreen,backgroundcolor=OliveGreen!25,bordercolor=OliveGreen,#1]{#2}}
\newcommandx{\improve}[2][1=]{\todo[linecolor=Plum,backgroundcolor=Plum!25,bordercolor=Plum,#1]{#2}}
\mathchardef\mhyphen="2D
\newcommand{\shorteq}{%
  \settowidth{\@tempdima}{-}
  \resizebox{\@tempdima}{\height}{=}%
}
\colorlet{fv}{gray!55}
\colorlet{ai}{gray!10}
\colorlet{ar}{gray!38}
\tikzset{%
  parent/.style={align=center,text width=3cm,rounded corners=3pt},
  child/.style={align=center,text width=3cm,rounded corners=3pt},
}
\def\addlegendimage{\csname pgfplots@addlegendimage\endcsname}
\tikzset{mycolor/.style = {line width=1bp,color=#1}}%
\tikzset{myfillcolor/.style = {draw,fill=#1}}%
\NewDocumentCommand{\highlight}{O{blue!40} m m}{%
\draw[mycolor=#1] (#2.north west)rectangle (#3.south east);
}
\NewDocumentCommand{\fhighlight}{O{blue!40} m m}{%
\draw[myfillcolor=#1] (#2.north west)rectangle (#3.south east);
}
\newcolumntype{+}{!{\vrule width 2pt}}
\newlength\savedwidth
\newcolumntype{L}[1]{>{\raggedright\let\newline\\\arraybackslash\hspace{0pt}}m{#1}}
\newcolumntype{C}[1]{>{\centering\let\newline\\\arraybackslash\hspace{0pt}}m{#1}}
\newcolumntype{R}[1]{>{\raggedleft\let\newline\\\arraybackslash\hspace{0pt}}m{#1}}
\newcommand{\M}{\mathcal{M}}
\newcommand{\ourtool}{\textsc{Vikriti}}
\newtheorem{df}{Definition}
\mathchardef\mhyphen="2D
\newcommand{\tree}{\mathcal{T}}
\newcommand{\intnodes}{\mathit{IntNodes}}
\newcommand{\leaves}{\mathit{Leaves}}
\newcommand{\node}{n}
\newcommand{\pred}{\mathit{Pred}}
\newcommand{\true}{\mathit{True}}
\newcommand{\false}{\mathit{False}}
\newcommand{\const}{\mathit{Val}}
\newcommand{\sem}[1]{\llbracket #1\rrbracket}
\newcommand{\TEGLITCHmax}{\mathrm{TE\_GLITCH}}
\newcommand{\TEGLITCH}{\mathrm{TE\_GLITCH(\alpha)}}
\newcommand{\TEGLITCHi}{\mathrm{TE\_GLITCH(\alpha,i)}}
\theoremstyle{plain}
\newtheorem{theorem}{Theorem}[section]
\theoremstyle{definition}
\theoremstyle{remark}
\newtheorem{remark}[theorem]{Remark}
\theoremstyle{definition}
\newtheorem{problem}{Problem}
\title{Glitches in Decision Tree Ensemble Models}
\author{
Satyankar Chandra \\
Indian Institute of Technology Bombay, Mumbai \\
\texttt{satyankar@cse.iitb.ac.in} \\
\And
Ashutosh Gupta \\
Indian Institute of Technology Bombay, Mumbai \\
\texttt{akg@cse.iitb.ac.in} \\
\And
Kaushik Mallik \\
IMDEA Software Institute Madrid, Spain \\
\texttt{kaushik.mallik@imdea.org} \\
\And
Krishna Shankaranarayanan \\
Indian Institute of Technology Bombay, Mumbai \\
\texttt{krishnas@cse.iitb.ac.in} \\
\AND
Namrita Varshney \\
Indian Institute of Technology Bombay, Mumbai \\
\texttt{namrita@iitb.ac.in}
}
\begin{document}

\maketitle

\begin{abstract}
Many critical decision-making tasks are now delegated to machine-learned models, and it is imperative that their decisions are trustworthy and reliable, and their outputs are consistent across similar inputs.
We identify a new source of unreliable behaviors---called \emph{glitches}---which may significantly impair the reliability of AI models having steep decision boundaries.
Roughly speaking, glitches are small neighborhoods in the input space where the model's output abruptly oscillates with respect to small changes in the input.
We provide a formal definition of glitches, and use well-known models and data sets from the literature to demonstrate that they have widespread existence and argue they usually indicate potential model inconsistencies in the neighborhood of where they are found.
We proceed to the algorithmic search of glitches for widely used gradient-boosted decision tree (GBDT) models.
We prove that the problem of detecting glitches is NP-complete for tree ensembles, already for trees of depth $4$.
Our glitch-search algorithm for GBDT models uses an MILP encoding of the problem, and its effectiveness and computational feasibility are demonstrated on a set of widely used GBDT benchmarks taken from the literature.

\end{abstract}

\section{Introduction}\label{sec:intro}
AI agents are getting increasingly common as automated decision makers for critical societal tasks~\citep{chouldechova2017fair,ensign2018runaway,liu2018delayed}, and the need for their trustworthiness is larger than ever.
AI trustworthiness is a multifaceted subject, and one of the generic considerations is that outputs of AI models be ``consistent'' over its inputs, though a concrete definition of consistency is still missing.
In some cases, consistency can be modeled as (global) \emph{robustness}~\citep{leino2021globally,chen2021learning}, which requires slight changes in the input cause only slight changes in the output.
For instance, for an AI model for screening graduate student applications, it would be desirable that two applicants with similar grades receive similar evaluations.
However, this is not always feasible: if students only above a certain cut-off grade are accepted, then there will be students in the opposite sides of the cut-off with arbitrarily close grades but facing different outcomes.
Another commonly used consistency requirement is the \emph{monotonicity} of outputs with respect to a given set of input features~\citep{sharma2020higher}.
For instance, in the graduate admission example, every candidate whose grade is higher than another accepted candidate must also be accepted, provided all other features remain similar.
However, many input-output relationships are only piecewise monotonic or not monotonic at all, making it infeasible to use monotonicity as a global requirement.


We propose a new formal model of \emph{in}consistencies, called \emph{glitches}, which unify and extend non-robustness and non-monotonicity to obtain a more faithful representation of output anomalies in AI models.
Technically, a given AI model has a glitch if there is a small input neighborhood within which a monotonic rise in the input causes the output to abruptly oscillate.
For the college admission example, a possible glitch would be a situation when two students with almost same grades $8.6$ and $8.7$ are rejected, but a third student with an in-between grade $8.65$ is accepted. 
This is an ``oscillation'' in outcomes (``accept''/``reject'') for small monotonic increase in inputs (grades), and we will classify it as a glitch.
Glitches like this can be viewed as sudden, \emph{simultaneous} robustness and monotonicity violations, where the robustness violation is due to the dissimilar outcome between similar grades $8.6$ and $8.65$, and the monotonicity violation is due to the fact that a rejected student (with grade $8.7$) has higher grade than the accepted student (with grade $8.65$).
We argue that such output oscillations are not desirable in most cases, and need to be identified at the design time for further scrutiny.


In the following, we present two case studies to motivate why glitches need attention; both  use datasets and models from the literature.
The first case study shows that both non-robustness and non-monotonicity may fail to accurately separate anomalies from anticipated behaviors, and neither of them can accurately capture glitches.
The second case study shows that existing real-world AI models designed for critical tasks do contain glitches, and these glitches were confirmed as serious anomalies by three independent domain experts.
More experiments are reported in Sec.~\ref{sec:section5}.

%
%
%

\medskip
\noindent\textbf{Case Study~I: Inadequacy of Robustness and Monotonicity.}
We use a publicly available pre-trained binary classifier from the literature~\citep{DBLP:conf/nips/ChenZS0BH19}, trained on the ijcnn1
dataset.\footnote{Source url: \url{https://github.com/zoq/datasets/tree/master/ijcnn1}}
Let $f\colon \mathbb{R}^d\to \{0,1\}$ represent the classifier. 
For every training input $x$, we will write $\mathit{label}(x)\in \{0,1\}$ to represent the label of $x$.
We want to identify output inconsistencies in $f$ by searching for non-robustness, non-monotonicity, and glitches.
To this end, for deciding whether a given output is an inconsistency or an anticipated behavior, we will compare it with the output labels of the training dataset, where the training dataset is assumed to be free of outliers.


{\bf Robustness:}
	For a given $\epsilon>0$, the classifier $f$ is called (locally) robust around an input $x\in \mathbb{R}^d$ if for every $x'\in \mathbb{R}^d$ with $\|x-x'\|\leq \epsilon$, $f(x)= f(x')$.
	 We use the existing \texttt{treeVerification} \citep{DBLP:conf/nips/ChenZS0BH19} tool to find the  robustness violations in the model in the $\epsilon$-neighborhoods of the test data points.
	A given robustness violation around the test input $x$ is \emph{anticipated} if there exists a pair of training inputs $y,y'$ in the $\epsilon$-neighborhood of $x$ such that $\mathit{label}(y)\neq \mathit{label}(y')$,  is \emph{unanticipated} if all (non-empty set) pairs of training inputs $y,y'$ in the $\epsilon$-neighborhood of $x$ satisfy $\mathit{label}(y)= \mathit{label}(y')$, and is \emph{inconclusive} if there is no pair of training inputs in the $\epsilon$-neighborhood of $x$.
	The tool \texttt{treeVerification} found $258$ robustness violations around $2200$ randomly sampled test data points, out of which $35$ were anticipated, $137$ were unanticipated, and $86$ were inconclusive.
	In other words, using robustness violations as a proxy to measure inconsistencies would possibly have a substantial false positive rate due to the considerable number of anticipated cases.
	
{\bf Monotonicity:}
	The classifier $f$ is monotonic with respect to a given feature dimension $i$, if for every $x,x'\in \mathbb{R}^d$ with $x_i>x'_i$ and $x_j\approx x_j'$ for every $j\neq i$, it holds that $f(x)\geq f(x')$ (i.e., either $f(x)=f(x')$ or $f(x)=1$ and $f(x')=0$).
	There is no known tool to automatically check monotonicity of tree ensemble models.
	Through random sampling, we found non-monotonic behavior of the model in many regions of the input space (Table~\ref{tab:monotonicity} in the appendix), and many of them were found to be anticipated based on the training data (one such case is in Fig.~\ref{fig:spike1}(c)).
	Therefore, using non-monotonicity as a proxy to measure anomaly would also have a considerable false positive rate.
        
{\bf Glitch:}        
	We now turn our attention to glitches.
	Formally, $f$ has a glitch in the dimension $i$ around an input $x$ if there are two nearby inputs $x^-,x^+$ with $\|x^+-x^-\|\leq \epsilon$ for a given small $\epsilon>0$, such that $x^-_i<x_i<x^+_i$ and $x^-_j=x_j=x^+_j$ for every $j\neq i$, and moreover $f(x^-)=f(x^+)$ but $f(x)\neq f(x^-)$.
	(A more general definition that suits models beyond binary classifiers is in Def.~\ref{def:glitch}). 
	A glitch captures simultaneous violations of robustness and monotonicity of the model around $x$, resulting in a sudden oscillation which in most practical cases would be a model inconsistency around the input $x$.
	Glitches can be caused by either \emph{anticipated} (e.g., Fig.~\ref{fig:spike1}(b)) or unanticipated robustness violations, but importantly, a majority of anticipated robustness violations are \emph{not} glitches (e.g., Fig.~\ref{fig:spike1}(a)), making them more fine-grained in identifying possible inconsistencies.
	Moreover, glitches were discovered both for anticipated (Fig.~\ref{fig:spike1}(c)) and unanticipated (Fig.~\ref{fig:spike1}(b)) monotonicity violations, and therefore just checking monotonicity of the AI model would not be able to find them.
\begin{figure}
    \pgfkeys{/pgf/number format/.cd,fixed}
    \begin{tikzpicture}
        \begin{axis}[scale only axis,           width=0.25\linewidth,               height=0.18\linewidth,
            scaled ticks=false,
            x label style={at={(axis description cs:0.5,0)},anchor=north},
            y label style={at={(axis description cs:0.4,1.1)},anchor=south},
            xlabel={(a)},
            ylabel={Out},
            axis lines = left,
            enlarge x limits = true,
            enlarge y limits = true,]
        \addplot table [x=f18, y=prob, col sep=comma,mark=none] {data/ijcnn_robust_252_f18_model_line.csv};
    \addplot[
            color=red,
            mark=*,
            only marks,
            mark size=1.2pt
        ] 
        table [x=f18, y=out, col sep=comma] {data/ijcnn_robust_252_f18_traindt.csv};

    \addplot[
            color=black,
            mark=*,
            only marks,
            mark size=1.2pt
        ] 
        table [x=f18, y=out, col sep=comma] {data/ijcnn_robust_252_f18_reference.csv};
        
       \draw[dashed]	(axis cs:0.45,0.5)	--	(axis cs:0.55,0.5)	node[pos=0.6,above]	{$\text{prediction} = 1$}	node[pos=0.6,below]	{$\text{prediction} = 0$};
        \draw[<->]	(axis cs:0.48,0.35)		--	(axis cs:0.48,0.65);
    \end{axis}
    \end{tikzpicture}
        \pgfkeys{/pgf/number format/.cd,fixed}
    \begin{tikzpicture}[spy using outlines={circle,black,magnification=2,size=1cm, connect spies}]
        \begin{axis}[scale only axis,           width=0.25\linewidth,               height=0.18\linewidth,
            scaled ticks=false,
            x label style={at={(axis description cs:0.5,0)},anchor=north},
            y label style={at={(axis description cs:0.4,1.1)},anchor=south},
            xlabel={(b)},
            ylabel={},
            axis lines = left,
            enlarge x limits = true,
            enlarge y limits = true,]
        \addplot table [x=f18, y=prob, col sep=comma,mark=none] {data/ijcnn_robust_9082_f18_model_line.csv};
    \addplot[
            color=red,
            mark=*,
            only marks,
            mark size=1.2pt
        ] 
        table [x=f18, y=out, col sep=comma] {data/ijcnn_robust_9082_f18_traindt.csv};
        
        \addplot[mark=triangle*,yellow,draw=black,mark size=2pt]    coordinates {(0.4703,0.757)};
        \node   (b) at    (axis cs:0.467,0.75)   {\footnotesize $p^-$};
        \addplot[mark=triangle*,yellow,draw=black,mark size=2pt]   coordinates {(0.4708,0.4875)};
        \node   (c) at    (axis cs:0.468,0.4)   {\footnotesize $p$};
		\addplot[mark=triangle*,yellow,draw=black,mark size=2pt]    coordinates {(0.4722,0.567)};
        \node  (d) at    (axis cs:0.48,0.6)   {\footnotesize $p^+$};
        
        \coordinate	(spypoint)	at	(axis cs:0.474,0.6);
        \coordinate (magnifying)	at	(axis cs:0.52,0.6);
        
        \draw[dashed]	(axis cs:0.45,0.5)	--	(axis cs:0.55,0.5);

    \end{axis}
    
    \spy [black,size=2.5cm]	on	(spypoint)		in node[fill=white]	at	(magnifying);
    \end{tikzpicture}
    \begin{tikzpicture}[spy using outlines={circle,black,magnification=1.5,size=1cm, connect spies}]
        \begin{axis}[scale only axis,           width=0.25\linewidth,               height=0.18\linewidth,
            scaled ticks=false,
            x label style={at={(axis description cs:0.5,0)},anchor=north},
            y label style={at={(axis description cs:0.4,1.1)},anchor=south},
            xlabel={(c)},
            ylabel={},
            axis lines = left,
            enlarge x limits = true,
            enlarge y limits = true,]
        \addplot table [x=f18, y=prob, col sep=comma,mark=none] {data/ijcnn_robust_48467_f18_model_line.csv};
    \addplot[
            color=red,
            mark=*,
            only marks,
            mark size=1.2pt
        ] 
        table [x=f18, y=out, col sep=comma] {data/ijcnn_robust_48467_f18_traindt.csv};
        
        \addplot[mark=triangle*,yellow,draw=black,mark size=2pt]    coordinates {(0.4688,0.627)};
        \node   (b) at    (axis cs:0.465,0.54)   {\footnotesize $p^-$};
        \addplot[mark=triangle*,yellow,draw=black,mark size=2pt]   coordinates {(0.4705,0.372)};
        \node   (c) at    (axis cs:0.468,0.3)   {\footnotesize $p$};
        \addplot[mark=triangle*,yellow,draw=black,mark size=2pt]    coordinates {(0.4727,0.527)};
        \node  (d) at    (axis cs:0.475,0.65)   {\footnotesize $p^+$};
        
        \coordinate	(spypoint)	at	(axis cs:0.47,0.5);
        \coordinate (magnifying)	at	(axis cs:0.5,0.6);
        
        \draw[dashed]	(axis cs:0.45,0.5)	--	(axis cs:0.55,0.5);

    \end{axis}
    \spy [black,size=2.5cm]	on	(spypoint)		in node[fill=white]	at	(magnifying);
    \end{tikzpicture}
  \caption{
  	Glitches through the lens of robustness and monotonicity violations of a pre-trained binary classifier~\citep{DBLP:conf/nips/ChenZS0BH19}.
  	The X-axes represent variations in the feature f18, with the rest of the features fixed to certain values.
  	The Y-axes represent the value of the output of the tree ensemble model, where outputs above $0.5$ are assigned the prediction label $1$, and outputs below $0.5$ are assigned the prediction label $0$.
  	The red dots represent training data points in the vicinity.
  	The plots show:
  	(a) an anticipated robustness violation, (b) a glitch with an anticipated robustness violation but unanticipated monotonicity violation, and (c) a glitch with anticipated robustness and monotonicity violations.
  	Each glitch has the points $p^-,p,p^+$ whose X-coordinates are $x^-,x,x^+$ (described in the text), and $p^-,p^+$ receive the prediction $1$ while $p$ receives the prediction $0$.
    }\label{fig:spike1}
  \end{figure}
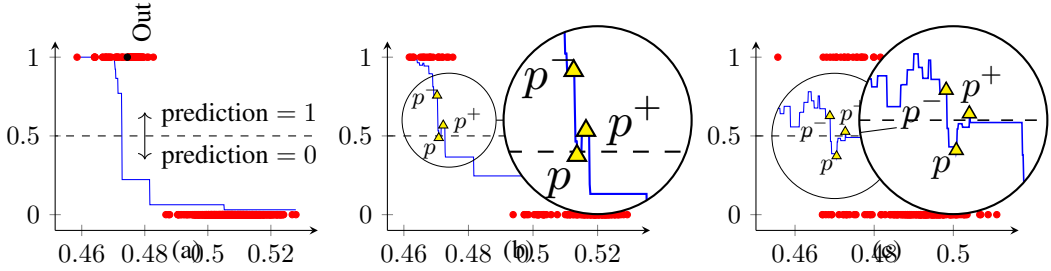

\medskip
\noindent\textbf{Case Study~II: Glitches in Breast Cancer Prediction Model.}
Using XGBoost~\citep{DBLP:conf/kdd/ChenG16} and a publicly available dataset~\citep{kagglebreastCancer}, 
we trained a tree ensemble model  for detecting the likelihood of malignancy of a breast mass from images obtained via the fine needle aspiration technique, which is a standard procedure healthcare providers use to get a cell sample from a suspicious lump in human body. 
\begin{wrapfigure}{r}{0.4\textwidth}
	\begin{tikzpicture}[scale=0.9]
        \begin{axis}[scale only axis,           width=0.8\linewidth,               height=0.8\linewidth,
            scaled ticks=false,
            x label style={at={(axis description cs:0.5,0)},anchor=north},
            y label style={at={(axis description cs:0.4,1)},anchor=south},
            xlabel={MCP},
            ylabel={Out},
            axis lines = left,
            enlarge x limits = true,
            enlarge y limits = true,]
        \addplot table [x=f7, y=out, col sep=comma,mark=none] {data/cancerDataPoints.csv};

        \addplot[mark=triangle*,yellow,draw=black,mark size=3pt]    coordinates {(0.05069,0.8)};
        \node   (a) at    (axis cs:0.048,0.77)   {\footnotesize $p^-$};
        \addplot[mark=triangle*,yellow,draw=black,mark size=3pt]   coordinates {(0.05259,0.2945)};
        \node   (a) at    (axis cs:0.058,0.2945)   {\footnotesize $p$};
        \addplot[mark=triangle*,yellow,draw=black,mark size=3pt]    coordinates {(0.0539,0.7126)};
        \node  (a) at    (axis cs:0.06,0.7126)   {\footnotesize $p^+$};
        
        \draw[dashed]	(axis cs:0.04,0.5)	--	(axis cs:0.09,0.5)	node[pos=0.75,above,align=center]	{prediction = \\ malignant}	node[pos=0.75,below,align=center]	{prediction = \\ benign};
        \draw[<->]	(axis cs:0.063,0.45)		--	(axis cs:0.063,0.55);
    \end{axis}
  \end{tikzpicture}
  \caption{Glitch in the breast-cancer prediction model in the feature MCP.
  The notations are the same as Fig.~\ref{fig:spike1}.}
  \label{fig:breast cancer glitch}
\end{wrapfigure}
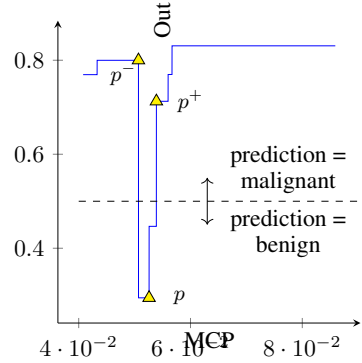

In this benchmark, there is a feature variable called ``mean concave points'' (MCP), which roughly indicates the average number of concave points in the cell boundaries.
MCP is one of the critical features carrying information about deformities in cell structures, and higher deformities usually indicate a higher likelihood of malignancy~\citep{breastCancerPaper}.
We discovered a glitch in the MCP feature (Fig.~\ref{fig:breast cancer glitch}), where for a specific fixed arrangement of the 31 feature values (provided in Table~\ref{tab:breast_cancer_glitch} in the appendix) other than MCP, there is a tiny range where the model's predicted likelihood of malignancy abruptly oscillates for an increase in MCP.
Fig.~\ref{fig:spike1} visualizes this phenomenon, where the points $p^-$, $p$, and $p^+$ have increasing MCP values, and the output sharply drops from $p^-$ to $p$, but then abruptly rises from $p$ to $p^+$.
This is a model anomaly, as was also independently confirmed by three different oncologists.\footnote{We will acknowledge their names in the final version.}

\noindent{\bf{Contributions}}. 
Our main contributions are threefold:

{\bf Formalizing glitches.}
        We propose a quantitative definition of glitches for general AI decision-makers, which unifies and extends non-robustness and non-monotonicity.
        Our definition immediately implies that every monotonic decision-maker is glitch-free.
        Moreover, we prove that the more robust a decision-maker is, the smaller are the glitches (to be made precise in Prop.~\ref{prop:lipschitz continuity and glitches}).
        
        {\bf Algorithmic complexities for tree ensembles.}
        For the algorithmic questions, we focused on tree-ensemble models, which are  piecewise linear functions whose behaviors vary drastically between input regions separated by decision boundaries, making them highly susceptible to glitches.
        We proved that the problem of verifying the existence of glitches in tree ensembles is NP-complete.
        
        {\bf Practical implementation and experimentation.}
    	We show how the verification problem for the existence of glitches can be encoded as a mixed-integer linear program (MILP) or as a query in satisfiability modulo theory (SMT).
    	Additionally, the problem of searching the largest glitch in a given model can also be encoded in MILP.
        Using off-the-shelf tools for MILP and SMT, we implemented solvers for the verification and search problems in our tool called \ourtool{}.
        Using \ourtool{}, we found glitches in almost every model we looked at from the literature, and also observed that, for most cases, our tool can solve the problems for large tree ensembles for sizes up to 1000 trees, depth 8, and hundreds of features within a reasonable time-out of about 1.5 hours.
        
        All proofs are omitted due to the lack of space but are included in App.~\ref{sec:appendix:proofs} and \ref{sec:appendix:NP-completeness}.
        


\noindent{\bf{Other Related Works}}. \label{sec:rel}
Our formalism of glitches unifies and extends (global) robustness~\citep{ruan2019global,leino2021globally} and non-monotonicity~\citep{sharma2020higher,chen2021learning} to model the commonly observed sharp oscillations in AI models' outputs.
Other related concepts include sensitivity~\citep{ahmadsensitivity}, where it is studied if a set of given sensitive features can change the decisions of a given tree ensemble model. The sensitivity problem is similar to robustness, except that the  sensitive features can be changed arbitrarily.
Like robustness, sensitivity does not capture the oscillatory, non-monotonic nature of glitches, which is intuitively more problematic.

Parallels can be drawn between robustness/sensitivity and glitches of AI models and sensitivity and glitches of electronic circuits.
According to the IEEE Standard Dictionary of Electrical and Electronics Terms (IEEE Std 100-1977), sensitivity is the ``ratio of cause to response'' (or response to cause, depending on the convention), while a glitch is ``a perturbation of the pulse waveform of relatively short duration of uncertain origin.''
These definitions are analogous to the robustness violations and glitches in AI models.
Like circuit sensitivity, robustness is a property of AI models, whereas glitches are symptoms of anomalies exhibited by some---but not all---non-robust models.
The analogy with electronic circuits is not surprising because we can view most AI models as particular kinds of analog circuits.

Existing trustworthiness metrics like robustness are studied in both local~\citep{zhong2021understandinglocalrobustnessdeep,MarabouToolPaper,deeppoly,betacrown} and global variants~\citep{croce2020robustbench,chen2021learning,ruan2019global,leino2021globally}.
The local variants ensure reliable operations within a small neighborhood of a given input point, while the global variants ensure reliable operations across the entire input domain of the model.
We primarily study glitches from the global point of view, and comment on the steps needed to obtain algorithms for the local variant.

From the algorithmic point of view, trustworthiness can be achieved in two stages of the lifecycle of AI models, either during their design~\citep{DBLP:journals/datamine/CalzavaraLTAO20,DBLP:conf/icml/ChenZBH19} or after the design and during verification~\citep{DBLP:conf/nips/ChenZS0BH19,DBLP:conf/iclr/ChengLCZYH19,DBLP:conf/icml/DevosMD21,DBLP:conf/aaai/EinzigerGSS19,DBLP:conf/icml/KantchelianTJ16,DBLP:conf/cp/IgnatievCSHM20}.
We consider the verification problem, where we assume we are given a tree-ensemble model, and our objective is to verify whether the model exhibits any glitches or not.
We show that the problem has the same complexity (NP-completeness) as verifying robustness of tree ensemble models~\citep{DBLP:conf/icml/KantchelianTJ16}, although our proof is significantly more involved due to the more complex nature of the definition of glitches.
Our MILP encoding for verifying absence of glitches in tree ensembles is inspired by the existing MILP encoding for verifying global robustness~\citep{DBLP:conf/icml/KantchelianTJ16}.
How to \emph{design} glitch-free AI models is an important question, and it is left open for future research.


\section{Formalizing Glitches in AI Decision-Makers}\label{sec:section3}

We model AI decision-makers as functions of the form $f\colon\mathbb{T}^m\to \mathbb{T}$, where $m>0$ and $\mathbb{T}$ is any ordered set equipped with a known order ``$\leq$'' and a distance metric ``$d(\cdot,\cdot)$,'' assigning a non-negative distance to any two points in $\mathbb{T}$.
(They are not required to be the same across all input dimensions and the output, though we use the same notations ``$\leq$'' and  ``$d(\cdot,\cdot)$'' for simplicity.)
For example, a boolean classifier with Euclidean feature space can be modeled as $f\colon \mathbb{R}^m\to \mathbb{B}$, where the set $\mathbb{R}$ has the usual order ``$\leq$'' over real numbers and the Euclidean metric ``$\|\cdot\|$,'' and the set $\mathbb{B}$ can be equipped with the order ``$0\leq 1$'' and the metric ``$d(0,1) = 1$.''
Similarly, a decision tree (to be formally defined in Sec. \ref{sec:section2}) with Euclidean feature space can be modeled as $f\colon \mathbb{R}^m\to \mathbb{R}$.


Before formalizing glitches, we need to extend the ordering ``$\leq$'' and the metric $d(\cdot,\cdot)$ over $\mathbb{T}$ to an ordering and a metric over $\mathbb{T}^m$:
For a given dimension $i\in [1;m]$, we introduce the ordering ``$\leq_i$'' over $\mathbb{T}^m$, such that $x\leq_i y$ 
if $x_i\leq y_i$, while $x_j=y_j$ for every $j\neq i$. 
Moreover, $x<_i y$ if $x\leq_i y$ and $x\neq y$.
For example, $(1,2,3)\leq_2 (1,3,3)$ and $(1,2,3)\leq_3 (1,2,4)$.
Clearly ``$\leq_i$'' is a partial ordering: e.g., $(1,2,3)$ and $(2,1,3)$ are not comparable using any of $\leq_1$, $\leq_2$, or $\leq_3$. 
However, any two vectors that only differ in their $i$-th dimension can always be compared using ``$\leq_i$.''
For a pair of points $x,y$ with $x\leq_i y$, we will write $d(x,y)$ to denote $d(x_i,y_i)$.

\begin{df}[Glitch]\label{def:glitch}
    Let $f\colon\mathbb{T}^m\to \mathbb{T}$ be a given decision-maker and $\alpha\in \mathbb{R}_{>0}$ be a given constant. 
    Let $(x^-,x,x^+)$ be an input triple  with 
    \begin{align}\label{eq:glitch:ordering of inputs}
    x^-\leq_i x\leq_i x^+
    \end{align}
    for some dimension $i$.
    The triple $(x^-,x,x^+)$ will be called an $\alpha$-glitch of $f$ in the dimension $i$ if the following two conditions hold:
    \begin{align}\label{eq:glitch:sharpness}
        \frac{\min\{d(f(x),f(x^-)), d(f(x),f(x^+))\}}{d(x^-,x^+)} \geq \alpha 
    \end{align}
    and
    \begin{align}\label{eq:glitch:non-monotonicity}
             f(x^-) > f(x) \, \land \, f(x)<f(x^+) 
             \qquad \text{ or } \qquad 
             f(x^-) < f(x) \, \land \, f(x) > f(x^+).
         \end{align}
\end{df}

Eq.~\eqref{eq:glitch:sharpness} imposes $\alpha$ as the minimum abruptness of the output fluctuations as we travel from $x^-$ to $x$ to $x^+$ along the dimension $i$: the smaller the distance $d(x^-,x^+)$ is, and the larger each of the jumps $d(f(x^-),f(x))$ and $d(f(x),f(x^+))$ is, the more abrupt is the fluctuation.
Eq.~\eqref{eq:glitch:non-monotonicity} formalizes the requirement that either there is a drop from $f(x^-)$ to $f(x)$ followed by a rise from $f(x)$ to $f(x^+)$ (first condition)---forming a ``canyon''-shaped glitch, or there is a rise from $f(x^-)$ to $f(x)$ followed by a drop from $f(x)$ to $f(x^+)$ (second condition)---forming a ``hill''-shaped glitch.
The glitches found in the motivating examples from Fig.~\ref{sec:intro} were canyon-shaped.

We formalize \emph{magnitudes} of glitches as follows.
Let $(x^-,x,x^+)$ be an $\alpha$-glitch of $f$ for a given $\alpha$.
Clearly, $(x^-,x,x^+)$ is also an $\alpha'$-glitch  for every $\alpha'\leq \alpha$.
The \emph{magnitude} of the glitch $(x^-,x,x^+)$ is the supremum of the set of $\alpha''$ for which $(x^-,x,x^+)$ is an $\alpha''$-glitch of $f$.
Intuitively, the higher the magnitude of a glitch is, the more abrupt are the fluctuations and more ``pointed'' it looks visually.

\begin{remark}[Multi-dimensional glitches]
    We define one-dimensional glitches, i.e., the three input points $x^-$, $x$, and $x^+$ in Def.~\ref{def:glitch} differ only in a single dimension (denoted $i$).
    In theory, we may consider multi-dimensional glitches by letting the input to simultaneously vary along multiple dimensions, by defining a suitable ordering of inputs that would replace ``$\leq_i$'' in \eqref{eq:glitch:ordering of inputs} in a well-defined manner when $x^-$, $x$, and $x^+$ may differ along multiple dimensions.
    We leave this generalization open for future, and focus on the simplest model of one-dimensional glitches, which are already abundantly found in our experiments on available models.
\end{remark}





\medskip
\noindent\textbf{Glitches, Lipschitz Continuity, Robustness, and Fairness}

Lipschitz continuity has close connection to a range of reliability metric of AI classifiers, including adversarial robustness~\citep{zuhlke2024adversarial}, global robustness~\citep{leino2021globally}, and individual fairness~\citep{dwork2012fairness}, and in all these applications, typically small Lipschitz constants are desirable.
Lipschitz continuity is usually defined on Euclidean spaces, for which, we assume that the decision-maker has the form $f\colon \mathbb{R}^m\to \mathbb{R}$.
The decision-maker $f$ is called (globally) \emph{Lipschitz continuous} if a small change in the input causes a proportionately small change in the output, i.e., if 
$L\coloneqq \sup_{x,y}(\|f(x)-f(y)\|)/(\|x-y\|) < \infty$, in which case the constant $L$ is called the \emph{Lipschitz constant} of $f$.

Lipschitz constant controls how fast the output increases or decreases, whereas glitch magnitude controls how fast the output moves from an increasing trend to a decreasing trend or vice versa.
In the following we draw a formal connection between Lipschitz constants and glitch magnitudes.

\begin{restatable}{proposition}{lipschitz}
\label{prop:lipschitz continuity and glitches}
    Let $f$ be a Lipschitz continuous decision-maker with the Lipschitz constant $L$.
    The magnitude of every glitch of $f$ is at most $L/2$.
\end{restatable}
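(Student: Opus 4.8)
The plan is to unwind the definitions and reduce the claim to the elementary inequality $\min\{a,b\}\le (a+b)/2$. Fix an arbitrary glitch $(x^-,x,x^+)$ of $f$ in some dimension $i$, so that $x^-\le_i x\le_i x^+$; by definition of $\le_i$ the three points agree on all coordinates $j\ne i$ and satisfy $x^-_i\le x_i\le x^+_i$. I would first observe that the non-monotonicity condition \eqref{eq:glitch:non-monotonicity} forces $x^-_i<x_i<x^+_i$ strictly: if $x^-=x$ then $f(x^-)=f(x)$, which contradicts both disjuncts of \eqref{eq:glitch:non-monotonicity}, and symmetrically for $x=x^+$; in particular $d(x^-,x^+)>0$, so the magnitude is well defined and equals the ratio in \eqref{eq:glitch:sharpness}.

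Next, set $a\coloneqq d(x^-,x)=|x_i-x^-_i|$ and $b\coloneqq d(x,x^+)=|x^+_i-x_i|$, both strictly positive. Since $x^-,x,x^+$ lie on a single coordinate axis with $x^-_i\le x_i\le x^+_i$, the distances add: $d(x^-,x^+)=|x^+_i-x^-_i|=a+b$ (this is just collinearity in $\mathbb{R}$, and it agrees with the Euclidean norm on $\mathbb{R}^m$ because the points differ in only one coordinate). Now apply Lipschitz continuity of $f$ to each leg of the glitch:
\[
d\bigl(f(x),f(x^-)\bigr)=\|f(x)-f(x^-)\|\le L\,\|x-x^-\|=L\,a,
\qquad
d\bigl(f(x),f(x^+)\bigr)\le L\,b .
\]
Hence $\min\{d(f(x),f(x^-)),\,d(f(x),f(x^+))\}\le L\min\{a,b\}$.

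Finally, combine these with the identity $d(x^-,x^+)=a+b$ and the bound $\min\{a,b\}\le \tfrac{a+b}{2}$ (valid for all $a,b>0$): the magnitude of the glitch is
\[
\frac{\min\{d(f(x),f(x^-)),\,d(f(x),f(x^+))\}}{d(x^-,x^+)}
\;\le\;\frac{L\min\{a,b\}}{a+b}\;\le\;\frac{L}{2}.
\]
Since $(x^-,x,x^+)$ was an arbitrary glitch, this proves the claim. I do not expect any real obstacle here; the only points needing care are the degenerate cases (ruling out $a=0$ or $b=0$, so that the ratio and the additivity $d(x^-,x^+)=a+b$ make sense) and checking that the metric $d$ on a single input coordinate coincides with the Euclidean norm restricted to the axis, so that the Lipschitz inequality can be applied leg by leg.
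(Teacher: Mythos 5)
Your proof is correct and takes essentially the same route as the paper's: apply the Lipschitz bound separately to the two legs $d(f(x),f(x^-))\le L\,a$ and $d(f(x),f(x^+))\le L\,b$, then use $\min\{a,b\}\le (a+b)/2$, which is exactly the paper's step of maximizing $L\min\{\lambda,1-\lambda\}$ over the relative position $\lambda$ of $x$ between $x^-$ and $x^+$. Your extra care about the degenerate cases ($a,b>0$, so the ratio is well defined) is a fine addition but does not change the argument.
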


Therefore, every Lipschitz continuous---aka robust---decision-maker with small Lipschitz constant can only have glitches with small magnitudes. 
However, the other direction is not generally true:
For example, the function $y=e^x$ is monotonic in $x$ and therefore does not have any glitch in its entire domain (Eq.~\eqref{eq:glitch:non-monotonicity} will never be satisfied), i.e., the magnitude of every glitch is $0$.
However, the function is not even Lipschitz continuous, i.e., there is no \emph{finite} Lipschitz constant.

This comparison suggests that the existing notion of Lipschitz continuity or robustness are not adequate for modeling and analyzing glitches.
Besides, many AI decision-makers are either not (globally) Lipschitz continuous, or, even if they are, finding the Lipschitz constant may be a challenging problem.
Therefore, Prop.~\ref{prop:lipschitz continuity and glitches} cannot always be used to rule out the presence of high-magnitude glitches, and we need specialized tools to detect and analyze glitches in AI models.


\section{Finding Glitches in Decision Tree Ensemble Models}\label{sec:section2}

Towards the algorithmic study of glitches in AI models, as a first step, we consider widely used decision tree ensemble models \citep{friedman2000additive,DBLP:conf/nips/ChenZS0BH19}, which are piecewise linear functions with sharp discontinuities between the linear ``pieces,'' making these models globally non-robust.
Similar discontinuities do not exist in AI models such as feed-forward neural networks.
Furthermore, decision-tree ensembles can be non-monotonic in general, for example if they are used for image classification tasks~\citep{ghosh2022comparative}.
These features create the perfect opportunity for the study of glitches.
%

We briefly recall the definitions of decision trees and their ensembles.
Fix a set of feature variables $V=\{ v_1,\ldots,v_m \}$ which take values over $\mathbb{R}$.
A decision tree $\tree$ over $V$ implements a decision-maker $\sem{\tree}\colon \mathbb{R}^m\to \mathbb{R}$. 
Syntactically, it  is a rooted binary tree whose internal nodes  are labeled with predicates of the form $v\leq \eta$, where $v\in V$ and $\eta$ is a rational constant, and leaf nodes are labeled with real constants.
For a given decision tree $\tree$, $\tree.\intnodes$ and $\tree.\leaves$  respectively denote its  internal nodes and leaves.
 For each internal node $\node$, $\node.\pred$ denotes the predicate label of $\node$, and has two children  $\node.\true$ and $\node.\false$, respectively.
Let $x\in\mathbb{R}^m$ be an arbitrary feature assignment for the variables in $V$, where $x_i$ denotes the value of $v_i$ for $i\in [1;m]$.
 $x$ generates a unique sequence of nodes $n_1\ldots n_k$, called a \emph{path}, in $\tree$, such that (a)~$n_1$ is the root of $\tree$, (b)~$n_k\in \tree.\leaves$, and (c)~for every $j\in [1;k-1]$, assuming $n_j.\pred = ``v_i\leq \eta"$ for some $i\in [1;m]$ and constant $\eta$, the successor $n_{j+1}$ is $n_j.\true$ if $x_i\leq \eta$ and is $n_j.\false$ otherwise.
Then, the \emph{output} of $\tree$ for the \emph{input} $x$ is given by $\sem{\tree}(x)=n_k.\const$. 

A \emph{decision tree ensemble} $\M$ over the feature variables $V$ is a finite set $\{\tree_1,\ldots,\tree_l\}$ of decision trees over $V$, implementing the decision-maker $\sem{\M}\colon \mathbb{R}^{|V|}\to \mathbb{R}$ defined as $\sem{\M}\colon x\mapsto \sum_{j=1}^l \sem{\tree_j}(x)$, and is called 
 the \emph{output} of $\M$ for the input $x$.



\subsection{Problem Statement and Complexity Results}\label{sec:stmt}

We pose three fundamental algorithmic questions, namely deciding the existence of glitches and searching for glitches with the highest magnitude.



\begin{problem}[$\TEGLITCHi$]\label{prob:dim}\hfill\\
    \emph{Input}: a decision tree ensemble $\M$, a constant $\alpha>0$ and a dimension $i$.\\
    \emph{Output:} a glitch of $\M$ in the dimension $i$ with magnitude larger than $\alpha$, or output that such a glitch does not exist in dimension $i$.
\end{problem}

\begin{problem}[$\TEGLITCH$]\label{prob:search glitches}\hfill\\
    \emph{Input}: a decision tree ensemble $\M$, a constant $\alpha>0$.\\
    \emph{Output:} a glitch of $\M$ with magnitude larger than $\alpha$, or output that such a glitch does not exist.
\end{problem}

\begin{problem}[$\TEGLITCHmax$]
\label{prob:search sharpest glitches}\hfill\\
    \emph{Input}: a decision tree ensemble $\M$.\\
    \emph{Output}: a glitch of $\M$ whose magnitude is at least as large as every other glitch of $\M$.
\end{problem}




We establish upper and lower complexity bounds for the verification problems $\TEGLITCHi$ and $\TEGLITCH$, whose proofs depend on the following simple result which could be of independent interest.
The following proposition essentially narrows down the circumstances under which glitches would exist in tree ensemble models.

\begin{restatable}{proposition}{glitchlocation}
\label{prop:glitch location}
    Suppose $\M$ is a decision tree ensemble, and $(x^-,x,x^+)$ is an $\alpha$-glitch of $\M$ in a given dimension $i$ and for $\alpha>0$.
    Then there exists a pair of distinct trees in $\M$ which have internal nodes respectively with predicates $v_i\leq a$ and $v_i\leq b$, such that $x^-_i\leq a<x_i\leq b< x^+_i$.
\end{restatable}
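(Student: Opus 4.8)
The plan is to show that a glitch forces three distinct evaluation outcomes at $x^-$, $x$, $x^+$ along dimension $i$, and that in a tree ensemble such a change in the output value can only be caused by some path switching at a predicate threshold of the form $v_i\le c$ lying between the relevant coordinates. First I would observe that by Eq.~\eqref{eq:glitch:non-monotonicity} we have $\sem{\M}(x)\neq\sem{\M}(x^-)$ and $\sem{\M}(x)\neq\sem{\M}(x^+)$, and by \eqref{eq:glitch:ordering of inputs} the three points agree on every coordinate $j\neq i$ and satisfy $x^-_i\le x_i\le x^+_i$; since the outputs differ, in fact $x^-_i<x_i<x^+_i$ (if, say, $x^-_i=x_i$ then $x^-=x$ and the outputs would coincide).

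Next I would use the additive structure $\sem{\M}(y)=\sum_{j=1}^l\sem{\tree_j}(y)$. Since $\sem{\M}(x^-)\neq\sem{\M}(x)$, at least one tree $\tree_p$ has $\sem{\tree_p}(x^-)\neq\sem{\tree_p}(x)$, meaning the input assignments $x^-$ and $x$ follow different root-to-leaf paths in $\tree_p$. Because $x^-$ and $x$ differ only in coordinate $i$, the first node where these two paths diverge must be an internal node whose predicate is of the form $v_i\le a$, and divergence there means $x^-_i\le a$ but $x_i>a$, i.e. $x^-_i\le a<x_i$. Symmetrically, from $\sem{\M}(x)\neq\sem{\M}(x^+)$ there is a tree $\tree_q$ and an internal node with predicate $v_i\le b$ where the paths of $x$ and $x^+$ diverge, giving $x_i\le b<x^+_i$. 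Chaining these, $x^-_i\le a<x_i\le b<x^+_i$, which is exactly the claimed inequality.

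The one remaining point is to argue $\tree_p$ and $\tree_q$ can be taken to be \emph{distinct} trees. Here I would use non-monotonicity more carefully: suppose for contradiction every tree witnessing the $x^-/x$ discrepancy coincides with every tree witnessing the $x/x^+$ discrepancy, so that the set of trees whose value changes between $x^-$ and $x$ equals (or, more precisely, one can reduce to the case that a single tree $\tree_p$ carries all the relevant change and no other tree's value differs among the three points). Within a single tree along a line in coordinate $i$, the output is a step function of $x_i$; the value at $x$ lies strictly between (or outside) the values at $x^-$ and $x^+$, but Eq.~\eqref{eq:glitch:non-monotonicity} demands $\sem{\tree_p}(x)$ be strictly less than \emph{both} neighbors or strictly greater than \emph{both}, i.e. a strict local extremum. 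A step function induced by nested threshold predicates on a single path structure can certainly have such an extremum, so the genuinely careful version is: the coordinate-$i$ thresholds of a single tree partition the $v_i$-axis into intervals, $x^-_i,x_i,x^+_i$ fall in three intervals $I^-,I,I^+$ with $I^-\le I\le I^+$ in the obvious left-to-right order, and the thresholds $a$ (between $I^-$ and $I$) and $b$ (between $I$ and $I^+$) are distinct thresholds but could both be nodes of the same tree; to force two \emph{trees} one must use that within one tree the leaf reached is determined by a single path, so the same pair of thresholds on one path are comparable along that path and — I expect — lead to a monotone rather than oscillating local behavior, contradicting \eqref{eq:glitch:non-monotonicity}. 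Making this last reduction rigorous (ruling out a single-tree oscillation, hence forcing two distinct trees) is the main obstacle; I would handle it by a case analysis on the relative position of the two diverging nodes on the path taken by $x$, showing that a single tree's output restricted to the line through $x^-,x,x^+$ in coordinate $i$ cannot have an interior strict extremum of the required shape unless two separate paths/subtrees — hence, after using the ensemble sum, two distinct trees — are involved.
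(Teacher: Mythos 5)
Your main argument is sound and is essentially the paper's argument in contrapositive form. The paper partitions the $v_i$-axis by the set of all thresholds appearing in predicates on $v_i$ across all trees, and observes that if two of $x^-_i,x_i,x^+_i$ fell in the same cell, the corresponding inputs would traverse identical paths in every tree and yield equal ensemble outputs, contradicting the strict inequalities in Eq.~\eqref{eq:glitch:non-monotonicity} (equivalently, forcing magnitude $0$). You instead argue directly via additivity that some tree must change value between $x^-$ and $x$ (resp.\ between $x$ and $x^+$), and that the first point of path divergence must be a node $v_i\leq a$ with $x^-_i\leq a<x_i$ (resp.\ $v_i\leq b$ with $x_i\leq b<x^+_i$). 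Both routes correctly establish the existence of the two thresholds $a$ and $b$ with $x^-_i\leq a<x_i\leq b<x^+_i$.

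Where you get stuck---upgrading the two witnessing nodes to \emph{distinct} trees---is a genuine obstruction, and the case analysis you propose cannot close it. As you yourself half-observe, a single tree restricted to the line through $x^-,x,x^+$ in coordinate $i$ is an arbitrary step function of $x_i$ determined by its nested thresholds, and such a step function can have a strict interior extremum of exactly the required shape: a tree whose root tests $v_i\leq 1$, with false-child a leaf of value $5$ and true-child a node $v_i\leq 0$ having leaves $5$ and $0$, outputs $5,0,5$ at $x_i=-1,\,0.5,\,2$. If this is the only tree in the ensemble that tests $v_i$ in the relevant range (all other trees constant along that line), the ensemble has a canyon-shaped glitch of positive magnitude, yet both thresholds live in the same tree, so no pair of distinct trees with the stated property exists. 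Hence no case analysis can rule out the single-tree scenario; the distinctness clause is not provable by your route or any other. For what it is worth, the paper's own proof also only establishes the existence of $a$ and $b$: its step from ``the claim is false'' to ``two of the three coordinates lie in the same cell'' silently ignores the possibility that both separating thresholds sit in one tree, and the later uses of the proposition (the NP-hardness reduction, the scaling discussion) rely only on the existence of $a$ and $b$ separating the three coordinates. So the portion you proved is the portion that is true and actually needed; the word ``distinct'' in the statement should be read as ``(not necessarily distinct)'' or dropped.
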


With this auxiliary result, we are able to establish the following tight complexity bound for the decision problems $\TEGLITCHi$ and $\TEGLITCH$.

\begin{restatable}{theorem}{complexity}
	\label{thm:np-completeness}
	$\TEGLITCHi$ and $\TEGLITCH$ are NP-complete.
\end{restatable}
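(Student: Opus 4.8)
The theorem refers to the decision versions of the two problems, i.e.\ ``does $\M$ have a glitch (in dimension $i$, resp.\ in some dimension) of magnitude $>\alpha$?''; I would prove membership in NP and NP-hardness separately. For membership I would use Proposition~\ref{prop:glitch location} to produce a polynomial-size witness. Let $\eta_1<\dots<\eta_r$ be the constants occurring in $v_i$-predicates across all trees of $\M$; they partition the $v_i$-axis into at most $r+1$ half-open cells, and likewise for every other dimension. Since $\sem{\M}$ is constant on each product of cells, any glitch $(x^-,x,x^+)$ in dimension $i$ picks out three \emph{distinct} cells of the $v_i$-partition in strictly increasing order (distinctness is forced by \eqref{eq:glitch:non-monotonicity} together with $x^-\le_i x\le_i x^+$) and one cell per dimension $j\neq i$ for the common value $x^-_j=x_j=x^+_j$. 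This combinatorial pattern has polynomial size; given it, the triple $(\sem{\M}(x^-),\sem{\M}(x),\sem{\M}(x^+))$ is determined, one checks \eqref{eq:glitch:non-monotonicity}, and the supremum of magnitudes realizable with this pattern equals $\min\{|\sem{\M}(x^-)-\sem{\M}(x)|,\,|\sem{\M}(x)-\sem{\M}(x^+)|\}$ divided by the infimum of $|x^+_i-x^-_i|$, which is an explicit difference of two of the $\eta_k$'s. All quantities are rationals of polynomially bounded bit-length, so the guess-and-check runs in NP; $\TEGLITCH$ additionally guesses $i$.

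\textbf{NP-hardness.}
I would reduce from $3$-SAT. Given a $3$-CNF $\varphi$ over $y_1,\dots,y_n$ with clauses $C_1,\dots,C_M$ (WLOG $M\geq 2$, padding with a clause on a fresh variable if needed), build an ensemble $\M_\varphi$ over features $v_0,v_1,\dots,v_n$, encode the truth value of $y_k$ by the test $v_k\le\tfrac12$, and target dimension $i=0$. For each clause $C_c=(\ell_1\vee\ell_2\vee\ell_3)$ add two trees: $A_c$ tests $v_0\le 0$ at the root (leaf $0$ if true) and otherwise runs the three-literal chain for $C_c$, returning $-1$ if the clause is satisfied and $0$ otherwise; $B_c$ is identical but tests $v_0\le 1$ at the root and returns $+1$/$0$ at the leaves. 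Each of these trees has depth at most $4$ (one $v_0$-test followed by a chain of at most three literal tests). For any fixed $v_1,\dots,v_n$ encoding an assignment $\sigma$ with $s(\sigma)$ satisfied clauses, $\sum_c(\sem{A_c}+\sem{B_c})$ equals $0$ on $\{v_0\le 0\}$, $-s(\sigma)$ on $\{0<v_0\le 1\}$, and $0$ on $\{v_0>1\}$; so, along $v_0$, $\M_\varphi$ exhibits the constant-$0$ / $-s(\sigma)$ / constant-$0$ ``canyon''. Set $\alpha\coloneqq M-1$ ($>0$ since $M\geq2$).

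\textbf{Correctness and the main obstacle.}
For any $x^-_0\le 0<x_0\le 1<x^+_0$ with a common $v_1,\dots,v_n$, the triple is a canyon glitch whenever $s(\sigma)>0$, with magnitude $s(\sigma)/(x^+_0-x^-_0)$; as $x^+_0-x^-_0$ can be taken arbitrarily close to (but strictly above) $1$, the supremum of achievable magnitudes is $s(\sigma)$. Hence $\M_\varphi$ has a dimension-$0$ glitch of magnitude $>M-1$ iff $\max_\sigma s(\sigma)>M-1$, i.e.\ iff some assignment satisfies all $M$ clauses, i.e.\ iff $\varphi$ is satisfiable. The delicate part I expect to be the real work is ruling out \emph{spurious} glitches, so that an unsatisfiable instance is a no-instance: by Proposition~\ref{prop:glitch location} a glitch in dimension $j\ge 1$ would need two trees with $v_j$-predicates at two distinct thresholds, but every $v_j$ is only ever compared to $\tfrac12$, so there is none; and in dimension $0$ the only thresholds are $0$ and $1$, the $v_0$-profile always has the shape above (always a canyon, never a hill, since $s(\sigma)\ge 0$), and for unsatisfiable $\varphi$ every such canyon has magnitude $<(M-1)/(x^+_0-x^-_0)<M-1=\alpha$. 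Since all trees have depth $\le 4$, this even yields hardness for depth-$4$ ensembles; the identical construction together with a guess of $i$ gives NP-hardness of $\TEGLITCH$, completing the proof.
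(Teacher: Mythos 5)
Your proposal is correct and follows essentially the same route as the paper: NP-membership via a polynomially checkable witness, and NP-hardness via a reduction from 3-SAT that pairs each clause with two depth-4 trees gated on a fresh control feature with two distinct thresholds, so that a canyon of integer depth equal to the number of satisfied clauses appears between them, with an integrality argument separating satisfiable from unsatisfiable instances and single-threshold literal predicates (plus Prop.~\ref{prop:glitch location}) ruling out spurious glitches. Your variant differs only cosmetically (thresholds $0$ and $1$ instead of $-0.5$ and $0.5-\varepsilon$, a $0/-s/0$ profile instead of $m/0/m$, $\alpha=M-1$ instead of $m$), and your NP-membership argument is in fact more careful than the paper's about the bit-length of the witness.
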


The most technically involved part of the claim is the NP-hardness lower bound of $\TEGLITCHi$, for which we give a proof sketch; the complete proof can be found in Sec.~\ref{sec:appendix:NP-completeness} in the appendix.
We reduce the NP-complete problem 3-CNF-SAT to an instance of $\TEGLITCHi$.
Suppose $\varphi$ is an instance of the 3-CNF-SAT problem, and we will construct a tree $\M$ such that $\varphi$ is satisfiable iff $\M$ has a glitch of a specified magnitude $\alpha$ in s specified dimension $i$.
The idea is that for each clause $C_k$ of $\varphi$, we will introduce a pair of trees $T_k$ and $T_k'$, each of whom will have one copy of a common sub-tree $T_k''$ of depth $3$.
The sub-tree $T_k''$ will track the assingment of variables in $C_k$, and will output $1$ if the assignments evaluate to $C_k=1$, and output $0$ otherwise.
Therefore, if $\varphi$ is satisfiable then every sub-tree $T_k''$ can all be made to output $1$ simultaneously, so that the sum $\sum_k T_k'' = m$, where $m$ is the number of clauses; if $\varphi$ is unsatisfiable, then $\sum_k T_k''<=m-1$.
We then introduce a new ``control'' feature variable $r$.
For each of the trees $T_k$ and $T_k'$, the root is labeled using predicates over $r$, and one child of the root will be connected to $T_k''$ and the other one to a constant leaf node.
And it is done in a way that by picking three nearby values of $r$, there will be a glitch, and moreover, by exploiting the gap in $\sum_k T_k''$ for the two cases of $\varphi$ being satisfiable or unsatisfiable, we will make sure that the magnitude of the glitch is at least $\alpha=m$ iff $\varphi$ is satisfiable.

Surprisingly, the proof of Thm.~\ref{thm:np-completeness} implies that the problems remain NP-complete, even if we fix the depths of the trees to a constant $d\geq 4$ specified in unary: 
Firstly, our proof already establishes NP-completeness when $d=4$.
Secondly, for $d>4$, our reduction can be modified by adding $4-d$ ``dummy nodes'' in the trees to increase their depths to $d$ without affecting the outputs.

\subsection{Algorithms}\label{sec:algo}

We now sketch the algorithms for $\TEGLITCHi$, $\TEGLITCH$, and $\TEGLITCHmax$, as described in Prob.~\ref{prob:dim}, \ref{prob:search glitches}, and \ref{prob:search sharpest glitches}, respectively, where we will encode the problems in mixed-integer linear programming.
We only provide an outline, because the actual encoding uses standard tricks that are well-known in the MILP literature.


Our MILP encodings are inspired by the encoding of \citep{DBLP:conf/icml/KantchelianTJ16} for finding adversarial examples in tree ensembles.
In particular, we introduce integer (boolean) variables for modeling the satisfaction or violation of predicate in each internal nodes in each tree, and continuous variables for modeling that evaluation status of each leaf (the leaf variables can be boolean, but that will increase the complexity).
Let $W$ be the set of all integer and continuous variables.
We use the predicate $\Phi_c(W)$ which is true iff the valuation of $W$ maps to standard consistency conditions~\citep{DBLP:conf/icml/KantchelianTJ16} in tree semantics, including dependence between the satisfaction of internal node clauses involving the same variable and activation of one leaf per tree.

The new parts in our encoding, compared to that of \citep{DBLP:conf/icml/KantchelianTJ16}, are (a)~the fact that none of the input points in a glitch is given (they consider the local robustness problem around a \emph{given} input point), and (b)~the encodings of the three conditions in Eq.~\eqref{eq:glitch:ordering of inputs}, \eqref{eq:glitch:sharpness}, and \eqref{eq:glitch:non-monotonicity}.
For (a), we make three copies of $W$, called $W^-$, $W$, and $W^+$, whose values would correspond to the three points in the glitch that we will find.
For (b), we use the following standard approaches~\citep{williams2013model}:
Suppose $x^-,x,x^+$ are the three input points and $c^-,c,c^+$ are the respective valuations of the ensemble for the three inputs (they are all described using some linear combination of the variables in $W^-,W,W^+$).
Then, for \eqref{eq:glitch:ordering of inputs}, we define the boolean predicates:
\begin{align*}
 &\text{[constant $i$]}\qquad \Psi_1(W^-,W,W^+,i)\coloneqq  \bigwedge_{\substack{j \neq i}} \left( x^-_j = x_j=x^+_j \right) \land \left( x^-_i < x_i < x^+_i \right),\\
 &\text{[variable $i$]}\qquad \Psi_2(W^-,W,W^+)\coloneqq \bigvee_{i} \bigwedge_{\substack{j \neq i}} \left( \left( x^-_j = x_j=x^+_j \right) \land \left( x^-_i < x_i < x^+_i \right)   \right),
\end{align*}
and for \eqref{eq:glitch:non-monotonicity}, we define the boolean predicate:
\begin{equation*}
\Delta(W^-,W,W^+)\coloneqq
((c^- \geq 0 \land c^+ \geq 0) \lor (c^- < 0 \land c^+ < 0 ) )
\land ((c < 0 \land c^- \geq 0) \lor(c \geq 0\land c^+ < 0)).
\end{equation*}
Finally, the exact encoding of \eqref{eq:glitch:sharpness} will depend on the problem, and we will write $M(W),M(W^-),M(W^+)$ to denote the outputs of the ensemble on the respective inputs.
We only show the MILP encoding for $\TEGLITCHmax$:
\begin{align*}
          &\quad\max_{W^-,W,W^+,\alpha,i} \min\{|M(W^-),M(W)|,|M(W),M(W^+)|\}
              - \alpha|x_i^+-x_i^-|\\
            &\text{subjected to:}\\
            &\quad\Phi_c(W^-)\land \Phi_c(W)\land \Phi_c(W^+) 
              \land \Psi_2(W^-,W,W^+)\land \Delta(W^-,W,W^+).
        \end{align*}
The encoding of $\TEGLITCHi$ and $\TEGLITCH$ are similar, except that the magnitude requirement of $\alpha$ becomes an additional constraint ``$\min\{|M(W^-),M(W)|,|M(W),M(W^+)|\} \geq \alpha|x_i^+-x_i^-|$,'' and the objective function becomes a constant, i.e., these are constraint satisfaction problems rather than optimization problems.
Furthermore, for $\TEGLITCHi$, the ``$\Psi_2(W^-,W,W^+)$'' term in the constraint is replaced by ``$\Psi_1(W^-,W,W^+,i)$.''
The full encodings of $\TEGLITCHi$ and $\TEGLITCH$ are provided in Fig.~\ref{fig:MILP encodings} in the appendix.

The solutions of these problems provides us $W^-,W,W^+$, in addition to $i$ for Prob.~\ref{prob:search glitches}, and $\alpha$ and $i$ for Prob.~\ref{prob:search sharpest glitches}, from which the desired glitch $(x^-,x,x^+)$ can be easily extracted.

\smallskip
\noindent\textbf{On the \emph{local} glitch-search problem:}
The above MILP encodings are for the global search of glitches across the whole range of inputs.
Sometimes, we are interested in finding glitches in specific regions of the input domain.
We can easily solve such local search problem by modifying the range of values that $W^-,W,W^+$ are allowed to assume.

\smallskip
\noindent\textbf{On the usage of satisfiability modulo theory (SMT) solvers:}
SMT solvers have widespread use for finding satisfying assignments of existentially quantified formulas involving boolean connectives and suitable arithmetic theories~\citep{de2011satisfiability,barrett2018satisfiability}.
Since the $\TEGLITCHi$ and $\TEGLITCH$ are constraint satisfaction problems, in principle, we can encode their constraints as SMT instances (with linear real arithmetic).
However, in practice, even with state-of-the-art solvers like Z3~\citep{demoura2008z}, the SMT route proved to be significantly less efficient as compared to using Gurobi~\citep{gurobi} for solving the MILP instances of the problem (details in Sec.~\ref{sec:section5}).
Besides, Gurobi is an ``anytime stoppable'' solver, meaning we can stop it anytime before it finished its computations, and we will still get the best results obtained thus far.
Therefore, for the glitch-search problems, we recommend using the MILP route instead of the SMT route.

%
%
%


\section{Experimental Evaluation}\label{sec:section5}

We report our experimental results oriented towards the following three research questions:

\smallskip
\noindent\textbf{RQ~1:}	How do glitches perform compared to robustness violations and non-monotonicity as proxies for finding anomalies?

\noindent\textbf{RQ~2:} Can we solve the problems $\TEGLITCHmax$, $\TEGLITCH$, and $\TEGLITCHi$ within reasonable time for realistic models from the literature? Will glitches be discovered, and if yes, then of what magnitudes and in which of the features?

\noindent\textbf{RQ~3:} How do the algorithms scale with increasing number of trees in the ensemble?

We answer these questions using benchmark models whose details are in Table~\ref{tab:model_details} in the appendix.

\noindent\textbf{RQ1:}
For each model, we sampled $100\times (\text{feature dimension}) $ data points from the test set, and locally searched for glitches (using \ourtool{}) and robustness violations (using \texttt{treeVerification} \citep{DBLP:conf/nips/ChenZS0BH19}) around them.
We report our results in Table~\ref{tab:anticip} and for each sample \ourtool~took less than 1 second.
We observe that a large fraction of robustness violations are anticipated, implying that checking robustness violations would grossly overstate the number of anomalies.
On the other hand, glitches are way more rare, because they require the monotonicity violation at the same time.
We are not aware of any existing tools for checking (anticipated) monotonicity violations.
We did some monotonicity testing via statistical sampling of data points; the results are reported in Table~\ref{tab:monotonicity} in the appendix.
We observe that all models are non-monotonic for a significant chunk of the state space, which turn out to be anticipated in many cases (by manual inspections).
%

\begin{figure}[h!]
	\centering
	\begin{minipage}[b]{0.45\textwidth}
	\centering
	\begin{tikzpicture}[scale=0.7]
	\begin{axis}[
		x tick label style={
			/pgf/number format/1000 sep=},
		xticklabel style={yshift=5pt,rotate=60},
		xlabel=no.\ of trees,
		ylabel=avg.\ time (s),
		ymode=log,
		enlargelimits=0.05,
		ybar interval=0.7,
		ymax=8000
	]
		\addplot coordinates {
			(10,11.89)
			(30,25.97)
			(60,56.60)
			(100,96.06)
			(200,486.42)
			(500,4452)
			(800,4900.7)
			(1000,5000)
		};
	\end{axis}
	\end{tikzpicture}
	\captionof{figure}{Variation of computation time of \ourtool{} for solving $\TEGLITCHi$ on an average over different choices of feature $i$ and for $\alpha=0.001$.}
	\label{fig:barplot number of trees}
	\end{minipage}
	\hfill
	\begin{minipage}[b]{0.5\textwidth}
	\centering
	\vspace*{-1.5ex}
	\begin{tabular}{|l|c|c|c|c|c|}	
		\hline
		\textbf{Model} & \textbf{\large$\boldsymbol{\epsilon}$} & \multicolumn{3}{c|}{\textbf{Robustness}} & \textbf{\#Glitch} \\
		\cline{3-5}
		& & \textbf{\#A} & \textbf{\#U} & \textbf{\#I} & \\
		\hline
	BCR    & 0.278 & 190   & 10    & 0   & 1  \\
	BCU  &0.067 & 104   & 16    & 0   & 1  \\
	DR   & 0.036& 389   & 2     & 1   & 23 \\
	DU   & 0.004 & 349   & 2     & 1   & 15 \\
	IJR   & 0.004 & 35    & 137   & 86  & 34 \\
	IJU  & 0.004 & 133   & 217   & 46  & 31 \\
	WSR  & 0.004 & 0     & 333   & 128 & 0  \\
	WSU  & 0.004& 10    & 96    & 19  & 72 \\
	BMR  &  0.004 & 784   & 0     & 0   & 0  \\
	BMU  & 0.004 & 2341  & 11    & 0   & 0  \\
	\hline
	\end{tabular}
	\captionof{table}{Robustness violations and (local) glitches in the $\epsilon$-neighborhoods of randomly sampled test data points. For robustness: A = anticipated, U = unanticipated, I = inconclusive.}
	\label{tab:anticip}
      \end{minipage}
      \vspace{-4mm}
	\end{figure}

\textbf{RQ2:}
We report the results in Table~\ref{tab:outcomes of glitch search}.
The key takeaways are that glitches are widespread, and importantly, often they have large magnitudes (e.g., SPD has a large anomaly in the feature EI) that possibly indicate larger anomalies.
Furthermore, our algorithms can find glitches in reasonable amount of time, and clearly, the MILP encoding is significantly faster as compared to the SMT encoding.

\begin{table*}[t]
    \centering
\small

\noindent
\begin{tabularx}{0.985\textwidth}{|l|c c|m{1cm} m{1cm}|c c|c c|}
\hline
\multirow{3}{*}{Dataset} & 
\multicolumn{2}{|c|}{\multirow{2}{*}{$\TEGLITCHmax$}} & 
\multicolumn{2}{|c|}{$\TEGLITCH$} & 
\multicolumn{4}{|c|}{$\TEGLITCHi$, $\alpha = 0.001$, $i$ varying} \\
\cline{6-9}
& & & \multicolumn{2}{|c|}{$\alpha = 0.001$} & \multicolumn{2}{|c|}{MILP (Gurobi)} & \multicolumn{2}{|c|}{SMT (Z3)}\\
\cline{2-9}
& $\alpha$, $i$ & time(s) & $i$ & time(s)  & \cmark -\xmark -TO & time(s) & \cmark -\xmark -TO & time(s) \\
\hline
BCR      &  0.018, f9  & 0.013   & f9 & 0.006 & 1-7-0 & 0.001 &   1-7-0 & 0.076 \\
BCU   &  0.07,  f3  &   0.26   & f3  & 0.03 &3-5-0 & 0.0298      & 3-5-0 & 0.43\\
DR          & 0.022,  f8 &  1.24   & f8  & 0.19 & 6-2-0 & 0.2  &  6-2-0 &  28.94      \\
DU         &  0.112,  f2  &  540  & f6 & 0.569  & 8-0-0& 0.78 & 8-0-0  & 1149.25 \\
IJR              &  0.11, f17  & 401   & f21 & 12.05  & 12-10-0&  29.45 & 0-0-22 & --- \\
IJU           &  0.12,  f12 &  187   & f18 & 12.6 & 12-10-0 & 4.496 &  0-0-22 & --- \\
WSR         &  0.1159, f48 &  2370  & f35 & 252 &68-7-1 & 715.74 &  0-0-76 &  --- \\
WSU      &  0.115,  f62  &  427   & f41 &  55.66&98-1-0 & 393.02      &  0-0-99 &  ---\\
BMR            &  0.001,  f375 & TO  &  ---  & TO & 254-100-21 & 2195 &  0-0-369 & --- \\
BMU          &  0.04,  f345 & TO   &  f601 &  107 & 249-106-2 & 216 & 0-0-357 & --- \\
KBC  &  0.1, f23  & TO  &  f7  & 2.79 &  11-11-0  &   0.98  & 10-10-1 & 718\\
BKCY                &   0.26,  NVS &  TO   &  TI &  7.5  & 16-0-0 & 39.366 &  0-0-16 & --- \\
HF          & 0.154, f2 & 3656  & f0 &  389.25 & 3-0-0 &  626.266 &  0-0-3 & --- \\
MF      &  0.03, APD & TO  & sp  & 438 &  11-5-0 & 805.31  &  0-0-16 & ---  \\
SPD          &  2.66, EI  & 4459  & MoL & 242 & 19-2-0 & 584.511 & 0-0-21 & --- \\
\hline
\end{tabularx}
\caption{
Experimental results for RQ2.
Both $\TEGLITCHmax$ and $\TEGLITCH$ are run using the MILP solver Gurobi, whereas $\TEGLITCHi$ is run using both Gurobi and Z3.
For $\TEGLITCHmax$, $\alpha,i$ correspond to the largest magnitude glitch that was found.
All real-valued features are normalized within $[0,1]$ so that the $\alpha$-values are comparable across models.
For $\TEGLITCH$, $i$ is the feature returned by Gurobi where a glitch with magnitude greater than $\alpha=0.001$ was found.
For both $\TEGLITCHmax$ and $\TEGLITCH$, ``time'' reports the computation time.
``TO'' indicates time-out (5000 seconds), and for Gurobi, we still obtain some (possibly sub-optimal) solution when the run times out. 
For $\TEGLITCHi$, ``\cmark'' and ``\xmark'' are the numbers of $i$ for which $\TEGLITCHi$, with $\alpha=0.001$, succeeded and failed to find glitches, respectively, and ``time'' indicates the average computation time for the instances excluding time-outs.
}
\label{tab:outcomes of glitch search}
\vspace{-5mm}
\end{table*}

\noindent\textbf{RQ3:}
Arguably, the aspect that contributed to the computational performance the most is the number of trees in the ensemble, because as this number grows, the possible places where glitches could be found grows exponentially (follows from Prop.~\ref{prop:glitch location}).
Therefore, we study the variation of average computation times (we chose the $\TEGLITCHi$ problem) with respect to number of trees in the ensemble. 
The results are shown in Fig.~\ref{fig:barplot number of trees}.
The takeaway is that our tool \ourtool{} can support a large number of trees within reasonable time to suit the purpose of real-world use cases.


\section{Discussions}\label{sec:section6}
We propose a formal model for \emph{glitches}, which represent potential anomalies in AI decision-makers with widespread existence in realistic tree ensemble models.
Glitches unify and extend robustness and monotonicity, and are more refined in predicting inconsistencies than either of the two.
We prove that verifying the existence of glitches in tree ensembles is an NP-complete problem, and we proposed MILP-based algorithms.
We demonstrate the practical usefulness of our tool $\ourtool{}$ on a range of benchmark examples collected from the literature.

Several future directions exist.
First, we are investigating glitches on other AI model architectures, and we already have some promising results for neural networks.
Second, it will be important to investigate how we can build models that are designed to be glitch-free.
Finally, it will be important to study cause analysis for glitches, to understand where they come from.
We conjecture that most glitches originate due to lack of data availability in some parts of the input domain during training, but this is yet to be experimentally confirmed.

\section*{Acknowledgements}
This work is supported by the SBI Foundation Hub for Data Science \& Analytics and by grant CEX2024-001471-M, funded by MICIU/AEI/10.13039/501100011033.

\bibliographystyle{plainnat}
\bibliography{reference}

\newpage
\appendix

\newpage

\centerline{\bf{\Large{Appendix}}}
\section{Model Details Used in the Experiments}
\label{sec:appendix:modelDetails}
\begin{table*}[h]
    \centering
    \begin{tabular}{lcccc}
    \toprule
    Benchmark Names & Abbrv & Tree & Depth & Features  \\
    \midrule
    breast\_cancer\_robust~\citep{DBLP:conf/nips/ChenZS0BH19} & BCR & 4 & 4 & 8   \\
    breast\_cancer\_unrobust~\citep{DBLP:conf/nips/ChenZS0BH19} &BCU& 4 & 5 & 8  \\
    diabetes\_robust~\citep{DBLP:conf/nips/ChenZS0BH19} &DR &20 & 4 & 8  \\
    diabetes\_unrobust~\citep{DBLP:conf/nips/ChenZS0BH19} & DU & 20 & 5 & 8  \\
    ijcnn\_robust~\citep{DBLP:conf/nips/ChenZS0BH19} & IJR & 60 & 8 & 22    \\
    ijcnn\_unrobust~\citep{DBLP:conf/nips/ChenZS0BH19} & IJU&60 & 8 & 22  \\
    webspam\_robust~\citep{DBLP:conf/nips/ChenZS0BH19} & WSR&100 & 8 & 76 \\
    webspam\_unrobust~\citep{DBLP:conf/nips/ChenZS0BH19} &WSU &100 & 8 & 99   \\
    binary\_robust~\citep{DBLP:conf/nips/ChenZS0BH19} & BMR&1000 & 5 & 369   \\
    binary\_unrobust~\citep{DBLP:conf/nips/ChenZS0BH19} &BMU& 1000 & 4 & 357   \\
    \hline
    \hline
    kaggle\_breast\_cancer~\citep{kagglebreastCancer} &KBC& 60 & 3 & 22 \\
    bankruptcy~\citep{kagglebankruptcyprediction} & BKCY &200 & 5 & 16 \\
    heart\_Failure~\citep{heartdiseaseprediction}&HF &700 & 5 & 3    \\
    machineFailure~\citep{kaggleMachineFailure}& MF&302 & 8 & 16 \\
    steel Plate Defect~\citep{kaggleSteelPlateDefect}& SPD  & 500 & 5 & 21 \\
    
    \bottomrule
    \end{tabular}
    \caption{Model details present in Table \ref{tab:outcomes of glitch search}}
    \label{tab:model_details}
\end{table*}

\section{Proofs of Technical Claims}\label{sec:appendix:proofs}

\lipschitz*

\begin{proof}
    Suppose $x^+$ and $x^-$ are any two inputs.
    We need to find $x$ and the \emph{maximum} $\alpha$ such that \eqref{eq:glitch:ordering of inputs}, \eqref{eq:glitch:sharpness}, and \eqref{eq:glitch:non-monotonicity} are satisfied, while assuming that $\|f(y)-f(z)\|\leq L\|y-z\|$ for every inputs $y,z$ (by Lipschitz continuity).
    Suppose, $x$ is located between $x^-$ and $x^+$ such that $(\|x^+-x\|)/(\|x^+-x^-\|) = \lambda$ for a $\lambda\in [0,1]$ whose value is to be determined, so that $(\|x-x^-\|)/(\|x^+-x^-\|) = (1-\lambda)$.
    Then $\|f(x^+)-f(x)\| \leq L\lambda\|x^+-x^-\|$ and $ \|f(x)-f(x^-)\| \leq L(1-\lambda)\|x^+-x^-\|$, implying that $\min\{\|f(x^+)-f(x)\|, \|f(x)-f(x^-)\| \}/(\|x^+-x^-\|) \leq L\cdot\min\{\lambda,1-\lambda\}$.
    It is easy to check that $\max_{\lambda\in [0,1]}L\cdot\min\{\lambda,1-\lambda\} = L/2$, which is the maximum magnitude of the glitch $(x^-,x,x^+)$.
\end{proof}

\glitchlocation*

\begin{proof}
    For the given dimension $i$, the predicates $v_i\leq \eta_i$ of the trees, for various $\eta_i$, partitions the domain of $v_i$.
    If for contradiction's sake we assume that the claim is false, then it would mean that any two values among $x^-_i,x_i,x^+_i$ would fall in the same partition created by the predicates on $v_i$.
    Suppose, without loss of generality, that $x^-_i$ and $x_i$ are within the same partition element.
    Since the variables $j\neq i$ are fixed in $x^-,x,x^+$, therefore, $x^-$ and $x$ would activate the exact same internal nodes in all the trees, and we would obtain $\M(x^-)=\M(x)$, which would mean that the magnitude of the glitch would be $0$---a contradiction.
\end{proof}

\section{Proof of Thm.~\ref{thm:np-completeness}}\label{sec:appendix:NP-completeness}

\complexity*

The proof of Thm.~\ref{thm:np-completeness} is divided into different parts that are presented below.

We start with upper bounds which are straightforward.

\begin{theorem}\label{thm:NP membership}
    $\TEGLITCHi$ and $\TEGLITCH$ are in NP.
\end{theorem}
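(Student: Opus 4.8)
\textbf{Proof plan for Theorem~\ref{thm:NP membership} ($\TEGLITCHi$ and $\TEGLITCH$ are in NP).}
The plan is to exhibit a polynomial-size certificate for a ``yes'' instance, together with a polynomial-time verification procedure. The natural certificate is the glitch triple $(x^-,x,x^+)$ itself, but a raw real-valued certificate might have unbounded bit-complexity, so the first task is to argue that whenever a glitch of magnitude $>\alpha$ exists, there is one whose coordinates have polynomially bounded bit-size. This is where Prop.~\ref{prop:glitch location} does the heavy lifting: it tells us that the three values $x^-_i, x_i, x^+_i$ can be pinned to lie in a configuration $x^-_i \le a < x_i \le b < x^+_i$ where $a$ and $b$ are thresholds appearing in the trees. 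So I would first guess, for each tree, which branch is taken at every internal node — equivalently, guess for each internal node predicate $v\le\eta$ whether it is satisfied — which simultaneously determines the activated leaf of every tree and hence determines the three ensemble outputs $c^- = \sem{\M}(x^-)$, $c = \sem{\M}(x)$, $c^+ = \sem{\M}(x^+)$ as fixed rationals computed by summing the guessed leaf constants. The set of all guesses has size linear in the ensemble, and the induced output values are computable in polynomial time.

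Second, I would verify that the guessed branch pattern is \emph{realizable} by some actual input triple satisfying the ordering constraint $x^-\le_i x\le_i x^+$: this amounts to checking that the conjunction of the (in)equalities on each coordinate $v_j$ is feasible. For coordinates $j\ne i$ the three points share a common value, so the constraints are just a conjunction of rational threshold inequalities of the form $v_j \le \eta$ or $v_j > \eta$, whose simultaneous feasibility is an interval-intersection check. For coordinate $i$, we need the three sub-intervals for $x^-_i$, $x_i$, $x^+_i$ (forced by the respective guesses about predicates on $v_i$) to admit a choice with $x^-_i < x_i < x^+_i$; again a constant-time check once the intervals are known, and it certifies existence of a valid triple. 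Third, I would check conditions \eqref{eq:glitch:non-monotonicity} and \eqref{eq:glitch:sharpness} directly on the rational outputs: the non-monotonicity condition is a simple comparison among $c^-, c, c^+$, and for the magnitude condition I would additionally guess the rational witnesses $x^-_i < x_i < x^+_i$ inside their forced intervals (with bounded bit-size, e.g.\ dyadic rationals of polynomially many bits suffice since the feasible interval endpoints are tree thresholds) and verify $\min\{d(c,c^-), d(c,c^+)\} \ge \alpha\cdot d(x^-_i,x^+_i)$. Actually, to handle the \emph{supremum} in the magnitude definition cleanly, note that to certify magnitude $>\alpha$ it is enough to certify that $(x^-,x,x^+)$ is an $\alpha'$-glitch for some rational $\alpha' > \alpha$, so we guess $\alpha'$ too (polynomially many bits) and check $\min\{d(c,c^-),d(c,c^+)\} \ge \alpha' d(x^-_i,x^+_i)$; since we want $d(x^-_i,x^+_i)$ small, choosing $x^-_i,x^+_i$ as close as possible to the boundary thresholds $a,b$ within their intervals maximizes the ratio, and the relevant extremal values are again tree thresholds, so a bounded-bit witness exists.

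For $\TEGLITCHi$ the dimension $i$ is part of the input, so the above is complete. For $\TEGLITCH$, the dimension $i$ is additionally existentially guessed (one of $m$ choices, contributing $O(\log m)$ bits), after which we run the same verification for that $i$. In both cases the certificate is linear in the size of $\M$ plus a constant number of polynomially-bounded rationals, and verification is a single pass over the trees plus a constant number of arithmetic comparisons, hence polynomial time; this establishes membership in NP. The main obstacle — and the only non-routine point — is the bit-complexity argument: one must be careful that the guessed real witnesses $x^-_i, x_i, x^+_i$ (and the witness $\alpha'$) can be taken to have polynomially bounded size, and this is precisely what Prop.~\ref{prop:glitch location} enables, by reducing the search for $x_i$ to the finitely many intervals cut out by the threshold constants occurring in the ensemble.
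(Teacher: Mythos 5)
Your proof is correct and takes essentially the same approach as the paper, whose entire argument is the one-line observation that the glitch triple itself is a certificate checkable in polynomial time. Your version additionally handles the bit-complexity of the witness (via the guessed branch pattern and the interval structure from Prop.~\ref{prop:glitch location}), a genuinely necessary point that the paper's proof leaves implicit; this extra care strengthens rather than changes the argument.
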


\begin{proof}
    It is easy to see that both problems have certificates (the glitch) that can be checked in polynomial time.
\end{proof}

We move on to lower bounds. 

\begin{theorem}
    \label{thm:NP-hardness of TEGLITCHi}
    $\TEGLITCHi$ is NP-hard.
\end{theorem}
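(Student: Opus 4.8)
The plan is to reduce 3-CNF-SAT to $\TEGLITCHi$, closely following the sketch given after Thm.~\ref{thm:np-completeness}. Given a 3-CNF formula $\varphi$ with clauses $C_1,\dots,C_m$ over boolean variables $y_1,\dots,y_n$, I would build a decision-tree ensemble $\M$ over feature variables $v_1,\dots,v_n$ (one per boolean variable, each interpreted as ``$v_k > 1/2$ means $y_k=\true$''), plus one extra \emph{control} feature $r$, and fix a magnitude target $\alpha = m$ and a dimension $i$ equal to the control dimension $r$. The construction will be arranged so that $\varphi$ is satisfiable iff $\M$ has a glitch of magnitude $\geq m$ in the dimension $r$.

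The key steps, in order. First, for each clause $C_k$ I would construct a depth-$3$ sub-tree $T_k''$ over the (up to) three variables occurring in $C_k$: it branches on each of the three literals' underlying $v_j$-predicates and outputs $1$ on exactly those $2^3-1=7$ assignments that satisfy $C_k$, and $0$ on the one falsifying assignment. Thus $\sum_k \sem{T_k''}(x) = m$ for some $x$ iff $\varphi$ is satisfiable, and otherwise $\sum_k \sem{T_k''}(x) \leq m-1$ for \emph{all} $x$. Second, for each $k$ I introduce two trees $T_k$ and $T_k'$, each rooted at a predicate over the control variable $r$: I pick three fixed rational thresholds $r^- < r < r^+$ (say $0,1,2$) and arrange the root predicates so that, as $r$ increases past these thresholds, the trees toggle between ``evaluate the copy of $T_k''$'' and ``return a fixed constant.'' Concretely, one can make $T_k$ contribute $\sem{T_k''}(x)$ when $r\le \text{(first threshold)}$ and $0$ otherwise, and $T_k'$ contribute $\sem{T_k''}(x)$ when $r>\text{(second threshold)}$ and $0$ otherwise, with a large constant-dropping tree active at the middle value — the precise wiring is the routine part. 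Third, I would add auxiliary constant-offset trees (and possibly a tree that dips at the middle $r$-value) so that the ensemble output $\M$ evaluated at the three control values $r^-,r,r^+$ equals roughly $2\sum_k\sem{T_k''}(x)$ at the extremes and something strictly smaller at the middle, producing the canyon shape of Eq.~\eqref{eq:glitch:non-monotonicity}, and so that the two jumps each have size $\sum_k\sem{T_k''}(x)$. Fourth, I normalize the distance $d(r^-,r^+)$ to $1$ (by choosing the thresholds appropriately), so that the glitch magnitude from Eq.~\eqref{eq:glitch:sharpness} is exactly $\min\{\text{jump}_1,\text{jump}_2\} = \sum_k \sem{T_k''}(x)$, which is $\ge m$ iff $\varphi$ is satisfiable. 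Finally, I check that $\M$ has only depth-$4$ trees (the control-variable root plus the depth-$3$ sub-tree) and that the reduction is polynomial.

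The main obstacle I anticipate is the bookkeeping in the middle two steps: getting a \emph{single} choice of the non-control features $x$ to simultaneously drive all $m$ sub-trees $T_k''$ toward value $1$ (which is exactly the encoding of simultaneous clause satisfaction), while ensuring that at the middle control value $r$ the ensemble output genuinely drops below both endpoint values regardless of $x$ — i.e.\ the monotonicity-violation condition Eq.~\eqref{eq:glitch:non-monotonicity} must hold \emph{structurally}, not depend on $\varphi$. This requires a constant-valued ``pothole'' tree keyed on $r$ whose drop at the middle value dominates any possible variation of $\sum_k \sem{T_k''}$, so that the canyon shape is forced; one then has to re-examine that the magnitude of the two symmetric jumps is still exactly $\sum_k\sem{T_k''}(x)$ and not contaminated by the pothole's depth. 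A secondary subtlety is the use of strict inequalities $x_i^- < x_i < x_i^+$ in the ordering requirement together with the half-open nature of tree predicates $v\le\eta$: one must place the chosen values $r^-,r,r^+$ strictly between consecutive thresholds (or exactly at them, consistently) so that each of $r^-,r,r^+$ routes deterministically through the intended branch, and invoke Prop.~\ref{prop:glitch location} only as a sanity check on where glitches can live. Once these are pinned down, combining with Thm.~\ref{thm:NP membership} gives NP-completeness, and the remark about padding with dummy nodes extends it to any fixed depth $d\ge 4$.
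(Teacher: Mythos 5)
Your reduction is essentially the paper's: one control feature $r$, a depth-$3$ clause gadget $T_k''$ per clause that outputs $1$ exactly on satisfying assignments, two trees $T_k,T_k'$ per clause rooted on $r$-predicates so that the ensemble equals $\sum_k\sem{T_k''}(x)$ on the two outer $r$-regions and $0$ in the middle, and threshold $\alpha=m$. Two points, however, differ from the paper and one of them is a genuine gap. First, the ``pothole'' tree you introduce to force the canyon shape \emph{structurally, regardless of $x$} is unnecessary and reflects a slight misreading of what must be proved: for the forward direction you only need the canyon to appear at the \emph{witnessing} satisfying assignment (where the outer values are $m>0$ and the middle is $0$), and for the converse direction the monotonicity condition is given to you as part of the assumed glitch; adding a constant dip would also force you to recalibrate $\alpha$ since the pothole alone creates glitches independent of $\varphi$. (Your ``$2\sum_k\sem{T_k''}(x)$ at the extremes'' is also inconsistent with your own wiring, under which only one of $T_k,T_k'$ is active at each extreme.)

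The genuine gap is in the step ``normalize $d(r^-,r^+)$ to $1$ so the magnitude is \emph{exactly} $\sum_k\sem{T_k''}(x)$.'' You cannot normalize this: by Prop.~\ref{prop:glitch location} the adversarially chosen $r^-$ and $r^+$ must lie in the two outer regions, which are half-open, so $d(r^-,r^+)$ is \emph{strictly greater} than the gap between the two thresholds and the supremum magnitude $\sum_k\sem{T_k''}(x)/(\text{gap})$ is never attained. With your thresholds the forward direction already fails (no triple achieves magnitude $\geq m$ even when $\varphi$ is satisfiable), and the converse direction needs an argument that ``magnitude $>m$'' still forces $\sum_k\sem{T_k''}(x)=m$. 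The paper closes both holes at once by setting the threshold gap to $1-\varepsilon$ with $0<\varepsilon<1/m$: a satisfying assignment then yields a triple at distance $1-\varepsilon/2$ with magnitude $m/(1-\varepsilon/2)>m$, and conversely any glitch of magnitude $>m$ has both jumps exceeding $m(1-\varepsilon)>m-1$, which by integrality of the ensemble's outputs forces the jumps to equal $m$ and hence $\sum_k\sem{T_k''}(x)=m$. You flag the boundary subtlety but do not resolve it; without the $\varepsilon$-slack plus the integrality argument the reduction as you describe it does not go through.
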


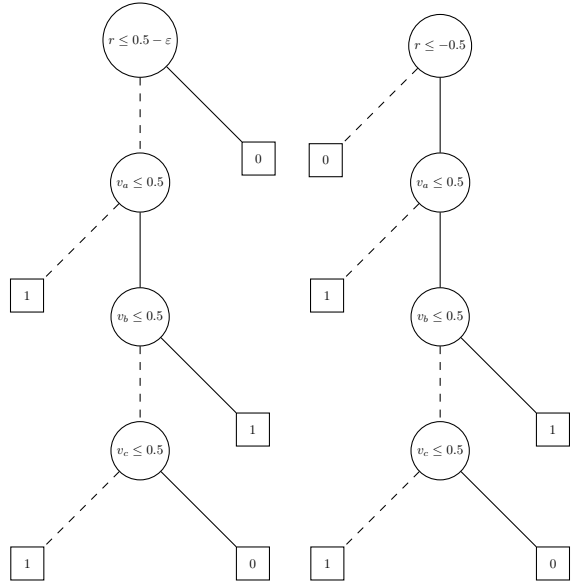
\begin{figure}[h]
    \centering
    \begin{tikzpicture}[every node/.style={scale=0.5}]
        \node[state]    (r)    at  (0,0)   {$r\leq 0.5-\varepsilon$};
        \node[state]    (a)     [below=of r]   {$v_a\leq 0.5$};
        \node[state]    (b)     [below=of a]   {$v_b\leq 0.5$};
        \node[state]    (c)     [below=of b]   {$v_c\leq 0.5$};

        \node[state,rectangle]    (rr)     [below right=of r]   {$0$};
        \node[state,rectangle]    (al)     [below left=of a]   {$1$};
        \node[state,rectangle]    (br)     [below right=of b]   {$1$};
        \node[state,rectangle]    (cl)     [below left=of c]   {$1$};
        \node[state,rectangle]    (cr)     [below right=of c]   {$0$};

        \path[-]    
            (r) edge  (rr)
            (a) edge    (b)
            (b) edge    (br)
            (c) edge    (cr);

        \path[dashed]    
            (r) edge  (a)
            (a) edge    (al)
            (b) edge    (c)
            (c) edge    (cl);
    \end{tikzpicture}
    \quad
    \begin{tikzpicture}[every node/.style={scale=0.5}]
        \node[state]    (r)    at  (0,0)   {$r\leq -0.5$};
        \node[state]    (a)     [below=of r]   {$v_a\leq 0.5$};
        \node[state]    (b)     [below=of a]   {$v_b\leq 0.5$};
        \node[state]    (c)     [below=of b]   {$v_c\leq 0.5$};

        \node[state,rectangle]    (rl)     [below left=of r]   {$0$};
        \node[state,rectangle]    (al)     [below left=of a]   {$1$};
        \node[state,rectangle]    (br)     [below right=of b]   {$1$};
        \node[state,rectangle]    (cl)     [below left=of c]   {$1$};
        \node[state,rectangle]    (cr)     [below right=of c]   {$0$};

        \path[-]    
            (r) edge  (a)
            (a) edge    (b)
            (b) edge    (br)
            (c) edge    (cr);

        \path[dashed]    
            (r) edge  (rl)
            (a) edge    (al)
            (b) edge    (c)
            (c) edge    (cl);
    \end{tikzpicture}
    \caption{Illustration: reducing the toy 3-CNF-SAT instance $(\lnot a\lor b \lor \lnot c)$ to a tree ensemble for $\TEGLITCHi$. Denoting the (only) clause as $c_1$, LEFT: $T_1$, RIGHT: $T_1'$. For each node, the child connected to it via a solid or dashed edge is its ``true'' or ``false'' child, respectively.}
    \label{fig:np-hardness-construction}
\end{figure}

\begin{proof}
We first show that $\TEGLITCHi$ is NP-hard. 
The reduction is from 3-CNF-SAT which is known to be NP-complete. Let $\varphi=\bigwedge_{i=1}^mC_i$ 
be an instance of 3-CNF-SAT consisting of $m$ clauses $C_1, \dots, C_m$ where each clause is a disjunction 
of 3 literals chosen from a set of variables $Z=\{z_1, \dots, z_n\}$. We construct a 
tree ensemble $\M$ with $2m$ trees such that each clause $C_i$ maps to a unique pair of trees
$T_i, T_i'$ in $\M$. We illustrate our construction in Fig.~\ref{fig:np-hardness-construction} on a toy example.
The maximum depth of any tree in $\M$ is 4, and we have $n+1$ variables 
$\{v_i\}_{i \in [n]} \cup  \{r\}$, where $[n]=\{1,2,\dots,n\}$.  
Here $r$ is a new variable, and we will use it interchangeably to denote both the variable and its assignment; recall that for variables $\{v_i\}_i$ we separately use $\{x_i\}_i$ to denote the respective assignments.
Each non-root node of a tree has the form $v_i \leq 0.5$, and if it evaluates to true, then it 
corresponds to assigning $z_i=\top$ (true), and if it evaluates to false, it corresponds 
to assigning $z_i=\bot$ (false).

Fix a clause $C_i$ of $\varphi$. Both trees $T_i, T_i'$ contain a common subtree 
$T_i''$ defined below. Each level of $T_i''$ has exactly one internal 
node and it corresponds to a literal in the clause $C_i$. 
 A path ends in a leaf with value 0, if it constitutes an  
assignment where $C_i$ evaluates to false, while paths which constitute an assignment 
where $C_i$ evaluates to true end in a leaf whose value is 1. 
Thus, a tree $T_i$ evaluates to 1 iff it represents a satisfying assignment for $C_i$.

    The root of $T_i$ has the predicate $r\leq 0.5-\varepsilon$, for $0<\varepsilon<1/m$, whose ``true'' child is $0$ and ``false'' child is the root of $T_i''$.
    In contrast, the root of $T_i'$ has the predicate $r\leq -0.5$, whose ``true'' child is the root of $T_i''$ and ``false'' child is $0$.
    We obtain $(\M,\alpha{=}m,i{=}r)$ as an input to the $\TEGLITCHi$ problem  and we claim that $\TEGLITCHi$ outputs a glitch iff $\varphi$ is satisfiable.

    [If:] Suppose $\varphi$ is satisfiable. From a satisfying assignment $\nu$ for $\varphi$, 
     we construct an output for $\TEGLITCHi$.
    For each variable in $\{v_i\}_{i \in [n]}$, assign the value $x_i = 0.5$ to $v_i$ if $\nu(z_i)=\top$, and 
    assign $x_i=0$ to $v_i$ if $\nu(z_i)=\bot$.  Let $x$ denote the entire assignment $\{x_i\}_{i \in [n]}$.
    It is easy to observe that the assignment $x$ in each sub-tree $T_i''$ would produce the value $1$.
    We claim that $((x,r{=}-0.5),(x,r{=}0),(x,r{=}0.5{-}\varepsilon/2))$ is an output of $\TEGLITCHi$.
    When $r{=}-0.5$, each of the $T_i'$ trees will output $1$, with the rest outputting $0$, giving us $\M((x,r{=}-0.5))=m$, when $r{=}0.5{-}\varepsilon/2$, each of the $T_i$ trees will output $1$, with the rest outputting $0$, giving us $\M((x,r{=}0.5-\varepsilon/2))=m$, and, when $r{=}0$, all the trees will output $0$, giving us $\M((x,r{=}0)){=}0$.
    Therefore, for $\alpha' = m/(1-\varepsilon/2)$, the triple $((x,r{=}-0.5),(x,r{=}0),(x,r{=}0.5-\varepsilon/2))$ is an $\alpha'$-glitch.
    Clearly, $\alpha'>\alpha =m$.

    [Only if:] Now suppose $\TEGLITCHi$ outputs an $\alpha'$-glitch $((x,r^-),(x,r),(x,r^+))$ in the dimension $r$  and $\alpha'>\alpha =m$.
    We show how this leads to a satisfying assignment for $\varphi$.
    First, observe that the nodes in the trees with the variable $r$ create three disjoint regions, namely $R_1 = \{r\mid r\leq -0.5\}$, $R_2=\{r\mid -0.5<r\leq 0.5-\varepsilon\}$, and $R_3=\{r\mid r>0.5-\varepsilon\}$.
    From Prop.~\ref{prop:glitch location}, we infer that $r^-$ and $r^+$ must lie in $R_1$ and $R_3$, respectively, while $r$ lies in $R_2$.
    Therefore, the distance between $r^-$ and $r^+$ is strictly greater than $1-\varepsilon$, and hence, from \eqref{eq:glitch:sharpness}, we obtain $\min\{ |\M((x,r))-\M((x,r^-))|, |\M((x,r^+))-\M((x,r))|\} > \alpha'(1-\varepsilon) > \alpha(1-\varepsilon) = m(1-\varepsilon) > m\cdot (m-1)/m = m-1$, where the last inequality is due to $\varepsilon < 1/m$.
    From our construction, it is impossible for $\M$ to output fractional value, and therefore, the minimum possible differences (for any choice of $x,r^-,r,r^+$) between $\M((x,r^-))$ and $\M((x,r))$, and $\M((x,r))$ and $\M((x,r^+))$ must be $m$.
    Furthermore, the maximum possible differences between $\M((x,r^-))$ and $\M((x,r))$, and $\M((x,r))$ and $\M((x,r^+))$ must also be $m$, because for $r^-$ and $r^+$, respectively, only the $T_i$ and only the $T_i'$ trees are ``activated'' while the rest output zero, giving us the maximum output $m$, while for $r$, all trees output zero.
    Therefore, we conclude that $|\M((x,r)) - \M((x,r^-))| = |\M((x,r^+))-\M((x,r))| = m$.
    It follows from the construction that the choice made in the assignment $x$ must coincide with a 
    satisfying assignment for $\varphi$.

    [Complexity of the reduction:] We only need to argue that the reduction is in polynomial time : this is clear since we have $2m$ trees over $n+1$ variables such that 
    we have a $m$-glitch in dimension $r$ iff the formula is satisfiable. 
\end{proof}

\begin{theorem}\label{thm:NP-hardness of TEGLITCH}
    $\TEGLITCH$ is NP-hard.
\end{theorem}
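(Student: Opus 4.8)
The plan is to avoid constructing a fresh reduction and instead reuse \emph{verbatim} the tree ensemble $\M$ built in the proof of Theorem~\ref{thm:NP-hardness of TEGLITCHi} from a 3-CNF-SAT instance $\varphi$ with $m$ clauses over the variable set $\{v_j\}_{j\in[n]}\cup\{r\}$, together with the constant $\alpha=m$. The only new ingredient needed is the observation that, on this particular $\M$, a glitch of positive magnitude can occur \emph{only} in the dimension $r$; once this is established, the entire ``if'' and ``only if'' analysis from the $\TEGLITCHi$ proof transfers unchanged to $\TEGLITCH$.

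First I would record the relevant structural fact about $\M$: every non-root internal node carries a predicate of the form $v_j\le 0.5$, so for each clause variable $v_j$ a \emph{single} threshold value $0.5$ appears across all $2m$ trees, whereas the only variable on which two distinct thresholds occur is the control variable $r$ (threshold $-0.5$ in the roots of the $T_i'$ trees and threshold $0.5-\varepsilon$ in the roots of the $T_i$ trees). This is true even when the same propositional variable recurs across several clauses, since every literal predicate uses $0.5$.

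Next I would apply Prop.~\ref{prop:glitch location}: if $(x^-,x,x^+)$ were an $\alpha$-glitch of $\M$ with $\alpha>0$ in some dimension $v$, there would exist two distinct trees of $\M$ with internal-node predicates $v\le a$ and $v\le b$ obeying $x^-_v\le a<x_v\le b<x^+_v$; the middle inequalities $a<x_v\le b$ force $a<b$, i.e.\ two \emph{distinct} thresholds on $v$ appearing in $\M$. By the structural fact above, this is impossible for $v=v_j$, so every glitch of $\M$ lies in dimension $r$. Consequently, ``$\M$ has a glitch of magnitude larger than $m$'' is equivalent to ``$\M$ has a glitch of magnitude larger than $m$ in dimension $r$'', which by Theorem~\ref{thm:NP-hardness of TEGLITCHi} holds if and only if $\varphi$ is satisfiable. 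Since $(\M,m)$ is produced in polynomial time exactly as in that theorem, this yields a polynomial-time many-one reduction from 3-CNF-SAT to $\TEGLITCH$, hence NP-hardness.

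The only step I would take care to state explicitly rather than gloss over is the implication ``a glitch in dimension $v$ exists $\Rightarrow$ two distinct $v$-thresholds appear in $\M$''; it rests on the strict inequality $x^-_i<x_i$ forced by Def.~\ref{def:glitch} (a triple with $x^-=x$ or $x=x^+$ has magnitude $0$) combined with the fact that the construction genuinely never introduces a second threshold on any $v_j$. Everything else — the correspondence between glitches in dimension $r$ and satisfying assignments, and the magnitude bookkeeping around $\alpha=m$ and $\varepsilon<1/m$ — is inherited directly from the proof of Theorem~\ref{thm:NP-hardness of TEGLITCHi}.
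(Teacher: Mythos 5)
Your proposal is correct and follows essentially the same route as the paper: reuse the ensemble $\M$ and $\alpha=m$ from the $\TEGLITCHi$ reduction, and invoke Prop.~\ref{prop:glitch location} together with the fact that every non-root predicate has the single threshold $0.5$ to conclude that no glitch can occur in any dimension other than $r$. Your explicit justification that a positive-magnitude glitch in dimension $v$ forces two distinct thresholds $a<b$ on $v$ is a detail the paper leaves implicit, but the argument is the same.
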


\begin{proof}
    Consider the reduction from 3-CNF-SAT to $\TEGLITCHi$ described in the proof of Thm.~\ref{thm:NP-hardness of TEGLITCHi}.
    Each non-root vertex of each of the trees in $\M$ had guards of the form $v_i\leq 0.5$, which implies, by Prop.~\ref{prop:glitch location}, that there will not exist any glitch in any of the dimensions other than $r$.
    It follows that $\TEGLITCHi$ outputs a glitch for the input $(\M,\alpha=m,i=r)$ iff $\TEGLITCH$ outputs a glitch for the input $(\M,\alpha)$. 
    Thus, the same reduction for Thm.~\ref{thm:NP-hardness of TEGLITCHi} is also a reduction from 3-CNF-SAT to $\TEGLITCH$.
\end{proof}

The proof of Thm.~\ref{thm:np-completeness} follows from Thm.~\ref{thm:NP membership}, \ref{thm:NP-hardness of TEGLITCHi}, and \ref{thm:NP-hardness of TEGLITCH}.

\section{MILP Encoding}
\label{sec:milp encoding}

The MILP encoding of the glitch search problems are shown below.
\begin{figure}[H]
    \centering
    \begin{subfigure}[t]{0.48\textwidth}
    \tightbox{
        \begin{align*}
            &\quad\max_{W^-,W,W^+} 1\\
            &\text{subjected to:}\\
            &\min\{|M(W^-),M(W)|,|M(W),M(W^+)|\}\\&\qquad\geq \alpha|x_i^+-x_i^-|\\
            &\Phi_c(W^-)\land \Phi_c(W)\land \Phi_c(W^+)\\
            &\quad\Psi_1(W^-,W,W^+,i)\land \Delta(W^-,W,W^+).
        \end{align*}
        }
    \caption{\textbf{Prob.~\ref{prob:dim}}: $\TEGLITCHi$}
    \end{subfigure}
    \begin{subfigure}[t]{0.48\textwidth}
    \tightbox{
        \begin{align*}
            &\quad\max_{W^-,W,W^+,i} 1\\
            &\text{subjected to:}\\
            &\min\{|M(W^-),M(W)|,|M(W),M(W^+)|\}\\&\qquad\geq \alpha|x_i^+-x_i^-|\\
            &\Phi_c(W^-)\land \Phi_c(W)\land \Phi_c(W^+)\\
            &\Psi_2(W^-,W,W^+)\land \Delta(W^-,W,W^+).
        \end{align*}
    }
    \caption{\textbf{Prob.~\ref{prob:search glitches}}: $\TEGLITCH$}
    \end{subfigure}
    \begin{subfigure}[t]{0.85\textwidth}
    \tightbox{
        \begin{align*}
          &\quad\max_{W^-,W,W^+,\alpha,i} \min\{|M(W^-),M(W)|,|M(W),M(W^+)|\}
              - \alpha|x_i^+-x_i^-|\\
            &\text{subjected to:}\\
            &\quad\Phi_c(W^-)\land \Phi_c(W)\land \Phi_c(W^+) 
              \land \Psi_2(W^-,W,W^+)\land \Delta(W^-,W,W^+).
        \end{align*}
    }
    \caption{\textbf{Prob.~\ref{prob:search sharpest glitches}}: $\TEGLITCHmax$}
    \end{subfigure}
    \caption{MILP encodings of the glitch-search problems.}
    \label{fig:MILP encodings}
\end{figure}

\section{Statistical Monotonicity Test of Used Models}

We tested the monotonicity of benchmarks by randomly sampling 10,000 points from feature space and 
report the monotonicity and monotonicity voilation of the model outputs in table \ref{tab:monotonicity}.

\begin{table}[h!]
	\centering
	\begin{tabular}{lccc}
	\hline
	\textbf{Model} & \textbf{\#Monotonic samples} & \textbf{\#Non-Monotonic samples} & \textbf{\% Non-Monotonicity} \\
	\hline
	BCR & 9496 & 504  & 5.04  \\
	BCU &  8423 & 1577 & 15.77 \\
	DR   & 7759 & 2241 & 22.41 \\
	DU & 6455 & 3545 & 35.45 \\
	IJR & 9061 & 939  & 9.39  \\
	IJU& 9021 & 979  & 9.79  \\
	WSR  & 8506 & 1494 & 14.94 \\
	WSU & 9361 & 639  & 6.39  \\
	BMR  & 9884 &  116  & 1.16  \\
	BMU  & 9939 & 61 & 0.61  \\
	\hline
	\end{tabular}
	\caption{Monotonicity violations observed by evaluating 10,000 randomly sampled points per model. Columns report total monotonic outputs, violations, and their percentage.
    }
    \label{tab:monotonicity}
\end{table}

\section{Glitches For Breast Cancer Detection Model }
Table~\ref{tab:breast_cancer_glitch} are the glitch points shown in Fig~\ref{fig:spike1}

\begin{figure}[h!]
    \centering
    
    \noindent
        \begin{tcolorbox}[colframe=black!60, colback=white!1, title=Feature Values, left=0pt, right=0pt, fonttitle=\bfseries, 
         width=0.69\textwidth, boxrule=0.2pt,
         top=0pt, bottom=0pt, toprule=2mm, bottomrule=0.4mm, boxsep=2mm]
      \small
      \renewcommand{\arraystretch}{1} 
      \begin{tabular}{p{0.35\linewidth} p{0.15\linewidth} p{0.15\linewidth} p{0.15\linewidth}}
        \textbf{Feature} & \textbf{Point1} & \textbf{Point2} & \textbf{Point3} \\ 
        \texttt{perimeter\_worst}  & 91.74  & 91.74  & 91.74  \\
        \texttt{concave\_points\_worst}  & 0.14658  & 0.14658  & 0.14658  \\
        \texttt{concave\_points\_se}  & 0.007395  & 0.007395  & 0.007395  \\
        \texttt{symmetry\_mean}  & 0.2  & 0.2  & 0.2  \\
        \texttt{radius\_worst}  & 16.84  & 16.84  & 16.84  \\
        \texttt{radius\_se}  & -0.6  & -0.6  & -0.6  \\
        \texttt{compactness\_se}  & 0.015105  & 0.015105  & 0.015105  \\
        \texttt{concavity\_mean}  & 0.026445  & 0.026445  & 0.026445  \\
        \texttt{concavity\_se}  & 0.07752  & 0.07752  & 0.07752  \\
        \texttt{compactness\_mean}  & 0.072  & 0.072  & 0.072  \\
        \rowcolor{yellow!30}
        \texttt{concave\_points\_mean}   & 0.05069  & 0.05259 & 0.0539  \\
        \texttt{concavity\_worst}  & 0.20852  & 0.20852  & 0.20852  \\
        \texttt{perimeter\_se}  & 3.2  & 3.2  & 3.2  \\
        \texttt{symmetry\_worst}  & 0.19885  & 0.19885  & 0.19885  \\
        \texttt{texture\_worst}  & 23.84  & 23.84  & 23.84  \\
        \texttt{smoothness\_mean}  & 0.0885  & 0.0885  & 0.0885  \\
        \texttt{radius\_mean}  & 13.568  & 13.568  & 13.568  \\
        \texttt{area\_worst}  & 653.59  & 653.59  & 653.59  \\
        \texttt{texture\_mean}  & 22.47  & 22.47  & 22.47  \\
        \texttt{area\_se}  & 41.211  & 41.211  & 41.211  \\
        \texttt{smoothness\_worst}  & 0.13737  & 0.13737  & 0.13737  \\
        \texttt{area\_mean}  & 698.8  & 698.8  & 698.8  \\
        \rowcolor{yellow!30}
        \texttt{out}   & 0.7999  & 0.2945  & 0.7126  \\
      \end{tabular}
      \end{tcolorbox}
      \caption{breast cancer glitch points }\label{tab:breast_cancer_glitch}
    \end{figure}

\end{document}